\authors{{\bf Tor Lattimore} and {\bf Marcus Hutter} and {\bf Peter Sunehag}}
\begin{document}

\buildtitle

\section{Introduction}

Sequence prediction is the task of predicting symbol $\omega_t$ having observed $\omega_{<t} = \omega_1 \omega_2 \omega_3 \cdots \omega_{t-1}$ where the underlying
distribution from which the sequence is sampled is unknown and may be non-stationary. We assume sequences are sampled from 
an unknown measure $\mu$ known to be contained in a countable model class $\M$. At time-step $t$ having observed $\omega_{<t}$ a predictor 
$\rho$ should output a distribution $\rho_t$ over the next symbol $\omega_t$. A predictor may be considered good if for all $\mu \in \M$ 
the predictive distribution of $\rho$ converges fast to that of $\mu$
\eq{
\Delta(\rho_t, \mu_t) \stackrel{fast}\longrightarrow 0
}
where $\Delta(\rho_t, \mu_t)$ is some measure of the distance between $\rho_t$ and $\mu_t$, typically either the Kullback-Leibler (KL) divergence $d_t$ or the squared Hellinger distance $h_t$.
One such predictor is the Bayesian mixture $\xi$ over all $\nu \in \M$ with strictly positive prior.
A great deal is already known about $\xi$. In particular the predictive distribution $\xi_t$ converges to $\mu_t$ with $\mu$-probability one and
does so with finite expected cumulative error with respect to both the KL divergence and
the squared Hellinger distance \cite{BD62,Sol78,Hut01b,Hut03,Hut05}.

The paper is divided into three sections. In the first we review the main results bounding the expected cumulative error
between $\mu_t$ and $\xi_t$ and prove high-probability bounds on this quantity. 
Such bounds are already known for the squared Hellinger distance, but not the KL divergence until now \cite{HM07}. We also bound the cumulative $\xi$-expected information
gain.
The second section relates to the confidence of the Bayes predictor. Even though $h_t$ and $d_t$ converge fast to zero,
these quantities cannot be computed without knowing $\mu$. We construct 
confidence bounds $\hat h_t$ and $\hat d_t$ that are computable from the observations and upper bound $h_t$ and $d_t$ with high probability respectively.
Furthermore we show that $\hat h_t$ and $\hat d_t$ also converge fast to zero and so can be used in the place of the unknown $h_t$ and $d_t$.
The results serve a similar purpose to upper confidence bounds obtained from Hoeffding-like bounds in the i.i.d.\ case to which our bounds are roughly comparable
\iftecrep
(Appendix \ref{app:experiments}).
\else
(\cite{LHS13bayes-conc-tech}).
\fi
Finally we present a simple application of the new results by showing that Bayesian sequence prediction
can be applied to the Knows What It Knows (KWIK) framework \cite{LLWS11} where we achieve a state-of-the-art bound using a simple, efficient and principled algorithm.

\section{Notation}

\iftecrep
A table summarising the notation presented in this section may be found in Appendix \ref{app:not}.
\fi
The natural numbers are denoted by $\N$. Logarithms are taken with respect to base $e$. The indicator function is $\ind{expr}$, which is equal to $1$ if
$expr$ is true and $0$ otherwise.
The alphabet $\A$ is a finite or countable set of symbols.
A finite string $x$ over alphabet $\A$ is a sequence
$x_1 x_2 x_3 \cdots x_n$ where $x_k \in \A$. An infinite string is a sequence
$\omega_1 \omega_2 \omega_3 \cdots$. We denote the set of all finite strings by $\A^*$ and the set of infinite strings
by $\A^\infty$. The length of finite string $x \in \A^*$ is denoted by $\ell(x)$. Strings can be concatenated.
If $x \in \A^*$ and $y \in \A^* \cup \A^\infty$, then $xy$ is the concatenation of $x$ and $y$. For string $x \in \A^* \cup \A^\infty$, 
substrings are denoted by $x_{1:t} = x_1 x_2 \cdots x_t$ and $x_{<t} = x_{1:t-1}$.
The empty string of length zero is denoted by $\epsilon$.

\subsubsect{Measures}
The cylinder set of finite string $x$ is $\Gamma_x \defined \set{x \omega : \omega \in \A^\infty}$. Define
$\sigma$-algebra $\F_{<t} \defined \sigma(\set{\Gamma_x : x \in \A^{t-1}})$ and $\F \defined \sigma(\set{\Gamma_x : x \in \A^*})$.
Then $(\A^\infty, \set{\F_{<t}}, \F)$ is a filtered probability space. Let $\mu$ be a probability measure on this space. We abuse notation by using the shorthands
$\mu(x) \defined \mu(\Gamma_x)$ and
$\mu(y | x)  \defined {\mu(xy) / \mu(x)} $.
The intuition is that $\mu(x)$ represents the $\mu$-probability that an infinite sequence sampled from $\mu$
starts with $x$ and $\mu(y|x)$ is the $\mu$-probability that an infinite sequence sampled from $\mu$ starts with $xy$ given that it starts with $x$.
We write $\mu \absolute \xi$ if $\mu$ is absolutely continuous with respect to $\xi$. From now on, unless otherwise specified, all measures will be probability measures
on filtered probability space $(\A^\infty, \set{\F_{<t}}, \F)$. 

\subsubsect{Bayes mixture}
Let $\M$ be a countable set of measures and $w : \M \to (0,1]$ be a probability distribution
on $\M$. The Bayes mixture measure $\xi:\F \to [0,1]$ is defined by
$\xi(A) \defined \sum_{\nu \in \M} w_\nu \nu(A)$.
By the definition $\xi(A) \geq w_\nu \nu(A)$ for all $A \in \F$ and $\nu \in \M$, which implies that $\nu \absolute \xi$.
Having observed data $x \in \A^*$ the prior $w$ is updated using Bayes rule to be
$w_\nu(x) \defined w_\nu {\nu(x) / \xi(x)}$.
Then $\xi(y|x)$ can be written
$\xi(y|x) = \sum_{\nu \in \M} w_\nu(x) \nu(y|x)$.
The entropy of the prior is $\ent(w) \defined -\sum_{\nu \in \M} w_\nu \ln w_\nu$.

\subsubsect{Distances between measures}
Let $\mu$ and $\xi$ be measures.
The squared Hellinger distance between the predictive distributions of $\mu$ and $\xi$ given $x \in \A^*$ is defined by
$\hellinger{x}{\mu}{\xi} \defined \sum_{a \in \A} (\sqrt{\mu(a|x)} - \sqrt{\xi(a|x)})^2$.
If $\mu \absolute \xi$, then the Kullback-Leibler (KL) divergence is defined by
$\KL{x}{\mu}{\xi} \defined \sum_{a \in \A} \mu(a|x) \ln {\mu(a|x) \over \xi(a|x)}$.
The KL divergence is not a metric because it satisfies neither the symmetry nor the triangle inequality properties. Nevertheless, it 
is a useful measure of the difference between measures and is occasionally more convenient than the Hellinger distance. 
Let $\xi$ be the Bayes mixture over $\nu \in \M$ with prior $w : \M \to (0,1]$. 
If $\rho \in \M$, then define random variables on $X^\infty$ by 
\eq{
\rho_{1:t}(\omega) &\defined \rho(\omega_{1:t}) & \rho_{<t}(\omega) &\defined \rho(\omega_{<t}) & \rho_{t}(\omega) &\defined \rho(\omega_t|\omega_{<t}) 
}
\eq{
\hellinger{t}{\rho}{\xi}(\omega) &\defined \hellinger{\omega_{<t}}{\rho}{\xi} &
\KL{t}{\rho}{\xi}(\omega) &\defined \KL{\omega_{<t}}{\rho}{\xi} 
}
The latter term can be rewritten as 
\eqn{
\label{eq:KL}
\KL{t}{\rho}{\xi} = \E_\rho\left[\ln {\rho_{1:t} \over \rho_{<t}} \cdot {\xi_{<t} \over \xi_{1:t}}\bigg|\F_{<t}\right]
= \E_\rho\left[\ln{\rho_{1:t} \over \xi_{1:t}}\bigg|\F_{<t}\right] + \ln {\xi_{<t} \over \rho_{<t}}.
}
Now fix an unknown $\mu \in \M$ and define random variables (also on $X^\infty$).
\eq{
d_t &\defined \KL{t}{\mu}{\xi} & 
h_t &\defined \hellinger{t}{\mu}{\xi} &
c_t(\omega)&\defined \sum_{\nu \in \M} w_\nu(\omega_{<t}) \KL{\omega_{<t}}{\nu}{\xi} \\ 
D_\infty &\defined \sum_{t=1}^\infty d_t 
&H_\infty &\defined \sum_{t=1}^\infty h_t 
&C_\infty &\defined \sum_{t=1}^\infty c_t.
}
Both $h_t$ and $d_t$ are well-known ``distances'' between the predictive distributions of $\xi$ and $\mu$ at time $t$. 
The other quantity $c_t$ is the 
$\xi$-expected information gain of the posterior between times $t$ and $t+1$ given the observed sequence at time $t$. 
\eq{
c_t &= \sum_{\nu \in \M} w_\nu {\nu_{<t} \over \xi_{<t}} \KL{t}{\nu}{\xi} 
= \E_\xi\Bigg[\underbrace{\sum_{\nu \in \M} w_\nu {\nu_{1:t} \over \xi_{1:t}} \ln {\nu_t \over \xi_t}}_{\mathclap{\text{information gain}}}\Bigg|\F_{<t}\Bigg]
}
An important 
observation is that $c_t$ is independent of the unknown
$\mu$.

\section{Convergence}

In this section we consider the convergence of $\xi_t - \mu_t \to 0$ for all $\mu \in \M$ where convergence holds with $\mu$-probability 1, in mean sum or with
high $\mu$-probability of a small cumulative error.
The first theorem is a version of the celebrated result of Solomonoff that the predictive distribution of the Bayes mixture $\xi$ converges fast to the truth in
expectation \cite{Sol78,Hut05}. The only modification is the alphabet is now permitted to be countable rather than finite. 

\begin{theorem}[\citen{Sol78,Hut05}]\label{thm:solomonoff-stopping}
The following hold:
\eq{
\E_\mu H_\infty \leq \E_\mu D_\infty &\leq \ln{1 \over w_\mu} &
\lim_{t\to\infty} d_t = \lim_{t\to\infty} h_t = 0,\quad w.\mu.p.1.
}
\end{theorem}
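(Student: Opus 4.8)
The plan is to establish the three claims in their natural order: first the pointwise domination $h_t \leq d_t$, then the cumulative bound $\E_\mu D_\infty \leq \ln(1/w_\mu)$, and finally the almost-sure convergence as a cheap corollary of the finiteness of the expected sum. Note throughout that $\xi \geq w_\mu \mu$ guarantees $\mu \absolute \xi$, so every KL term is well defined, and that $d_t, h_t \geq 0$, which is what licenses all sum--expectation interchanges below via Tonelli (the countability of $\A$ causes no trouble since all summands are non-negative).

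For $\E_\mu H_\infty \leq \E_\mu D_\infty$ it suffices to prove the pointwise inequality $\hellinger{x}{\mu}{\xi} \leq \KL{x}{\mu}{\xi}$ for every $x \in \A^*$, since then $h_t \leq d_t$ for all $\omega$ and $t$, and summing over $t$ and taking $\E_\mu$ preserves the inequality. I would rewrite $\KL{x}{\mu}{\xi} = -2\sum_{a} \mu(a|x)\ln\sqrt{\xi(a|x)/\mu(a|x)}$ and apply $\ln y \leq y - 1$ with $y = \sqrt{\xi(a|x)/\mu(a|x)}$, which bounds each logarithmic term below by $1 - \sqrt{\xi(a|x)/\mu(a|x)}$. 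Multiplying by $\mu(a|x)$ and summing, the right-hand side collapses (using $\sum_a \mu(a|x) = \sum_a \xi(a|x) = 1$) to $2 - 2\sum_a \sqrt{\mu(a|x)\xi(a|x)}$, which is exactly $\hellinger{x}{\mu}{\xi}$. Hence $h_t \leq d_t$ as required.

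For the main bound I would exploit the chain-rule structure already exposed in (\ref{eq:KL}). Taking $\E_\mu$ of $d_t = \KL{t}{\mu}{\xi}$ and applying the tower property removes the conditioning, giving $\E_\mu d_t = \E_\mu \ln(\mu_t / \xi_t)$ with $\mu_t = \mu(\omega_t|\omega_{<t})$ and $\xi_t = \xi(\omega_t|\omega_{<t})$. Summing over $t \leq T$ the logarithms telescope into $\ln(\mu_{1:T}/\xi_{1:T})$, so $\E_\mu \sum_{t=1}^T d_t = \E_\mu \ln\big(\mu(\omega_{1:T})/\xi(\omega_{1:T})\big)$. The crux is the mixture inequality $\xi \geq w_\mu \mu$ from the definition of $\xi$, which yields $\mu(\omega_{1:T})/\xi(\omega_{1:T}) \leq 1/w_\mu$ pointwise; the partial sums are therefore bounded by $\ln(1/w_\mu)$ uniformly in $T$. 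Since $d_t \geq 0$, monotone convergence lets me send $T \to \infty$ to conclude $\E_\mu D_\infty \leq \ln(1/w_\mu)$.

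Almost-sure convergence follows immediately: because $\E_\mu D_\infty < \infty$ and $D_\infty = \sum_t d_t$ is a series of non-negative terms, $D_\infty < \infty$ with $\mu$-probability one, and a convergent non-negative series has vanishing terms, so $d_t \to 0$ $\mu$-a.s.; the statement for $h_t$ is then immediate from $0 \leq h_t \leq d_t$. I expect no genuine analytic obstacle here — the only points requiring care are bookkeeping ones: checking that each $d_t$ is $\F_{<t}$-measurable and non-negative so that the telescoping under $\E_\mu$ incurs no integrability issues, and justifying every interchange of summation and expectation by non-negativity rather than by any dominated-convergence argument.
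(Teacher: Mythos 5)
Your proof is correct and follows essentially the same route as the paper's: the pointwise bound $h_t \leq d_t$, telescoping the conditional KL chain rule to obtain $\E_\mu \sum_{t=1}^T d_t = \E_\mu \ln(\mu_{1:T}/\xi_{1:T}) \leq \ln(1/w_\mu)$ via dominance $\xi \geq w_\mu \mu$, monotone convergence in $T$, and almost-sure convergence from $D_\infty < \infty$ a.s. The one point you gloss over is that the telescoping adds expectations of the \emph{signed} quantities $\ln(\mu_t/\xi_t)$, whose integrability does not follow from non-negativity and is handled in the paper by a separate lemma (Lemma \ref{lem:exists}); on the other hand your derivation of $h_t \leq d_t$ from $\ln y \leq y-1$ is more explicit than the paper's, which merely asserts that inequality.
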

\iftecrep
The proof can be found in Appendix \ref{app:solomonoff}.
\else
The proof can be found in the extended technical report \cite{LHS13bayes-conc-tech}.
\fi
Theorem \ref{thm:solomonoff-stopping} shows that the predictive distribution of $\xi$ converges to $\mu$ asymptotically and that it does so fast (with finite cumulative squared Hellinger/KL error) in expectation.
We now move on to the question of high-probability bounds on $D_\infty$ and $H_\infty$. The following theorem is already known and essentially unimprovable.
\begin{theorem}[\citen{HM07}]\label{thm:hellinger-concentration}
For all $\delta \in (0,1)$ it holds with $\mu$-probability at least $1 - \delta$ that
$H_\infty \leq \ln\frac{1}{w_\mu} + 2\ln\frac{1}{\delta}$.
\end{theorem}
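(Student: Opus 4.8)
The plan is to construct a nonnegative supermartingale under $\mu$ whose magnitude is controlled by the partial sums of the Hellinger error, and then to apply a maximal (Ville-type) inequality. A bound at a single fixed horizon will not suffice, since $H_\infty=\sum_{t=1}^\infty h_t$ couples all time-steps at once; the maximal inequality is precisely the tool that upgrades a one-step drift condition into a tail bound holding uniformly over $t$, and hence in the limit. The object I would track is the square-root likelihood ratio $Z_t\defined\sqrt{\xi_{1:t}/\mu_{1:t}}$, compensated by the factor $\exp\big(\tfrac12\sum_{s=1}^t h_s\big)$, giving $W_t\defined Z_t\exp\big(\tfrac12\sum_{s=1}^t h_s\big)$ with $W_0=\sqrt{\xi(\epsilon)/\mu(\epsilon)}=1$.

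The central step is the one-step drift computation. Since $Z_{<t}$ is $\F_{<t}$-measurable, a direct calculation using the definition of the squared Hellinger distance gives
\eq{
\E_\mu[Z_t\mid\F_{<t}] = Z_{<t}\sum_{a\in\A}\sqrt{\mu(a|\omega_{<t})\,\xi(a|\omega_{<t})} = Z_{<t}\big(1-\tfrac12 h_t\big) \le Z_{<t}\,e^{-h_t/2}.
}
The observation that makes the compensator work is that $h_t$ depends only on $\omega_{<t}$ and is therefore $\F_{<t}$-measurable, so $\exp\big(\tfrac12\sum_{s\le t}h_s\big)$ can be pulled out of the conditional expectation. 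Multiplying through and cancelling the $e^{-h_t/2}$ factor against the compensator yields $\E_\mu[W_t\mid\F_{<t}]\le W_{t-1}$, so $(W_t)$ is a nonnegative supermartingale with unit initial value.

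Finally I would invoke the maximal inequality for nonnegative supermartingales, which gives $\mu\big(\sup_t W_t\ge 1/\delta\big)\le\delta$ since $\E_\mu W_0=1$. The defining property $\xi_{1:t}\ge w_\mu\,\mu_{1:t}$ of the mixture gives $Z_t\ge\sqrt{w_\mu}$ and hence $W_t\ge\sqrt{w_\mu}\exp\big(\tfrac12\sum_{s\le t}h_s\big)$; as the partial sums increase to $H_\infty$, we get $\sup_t W_t\ge\sqrt{w_\mu}\exp(\tfrac12 H_\infty)$. Thus the event $\set{H_\infty\ge\ln\tfrac1{w_\mu}+2\ln\tfrac1\delta}$ is contained in $\set{\sup_t W_t\ge 1/\delta}$ and so has $\mu$-probability at most $\delta$, whose complement is exactly the stated bound. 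I expect the only real obstacle to be spotting the right compensator: one must recognise that the affinity drift $1-\tfrac12 h_t\le e^{-h_t/2}$ is cancelled precisely by the exponential of $\tfrac12$ times the cumulative error, and verify the predictability of $h_t$ that licenses pulling the compensator through the conditional expectation. Once the supermartingale is in hand, the remainder is a routine application of the maximal inequality together with $\xi\ge w_\mu\mu$.
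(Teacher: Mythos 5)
Your proposal is correct and, while the paper itself only cites this theorem from \cite{HM07} without reproducing a proof, your argument is exactly the standard one and coincides with the machinery the paper does use for Lemma \ref{lem:hellinger-error}: the one-step affinity identity $\sum_{a}\sqrt{\mu(a|x)\xi(a|x)}=1-\tfrac12\hellinger{x}{\mu}{\xi}\leq e^{-\hellinger{x}{\mu}{\xi}/2}$ is precisely Lemma \ref{lem:hellinger}, and the compensated square-root likelihood ratio is the nonnegative supermartingale to which the maximal inequality of Lemma \ref{lem:supermartingale} is applied, with dominance $\xi_{1:t}\geq w_\mu\mu_{1:t}$ supplying the $\ln\frac{1}{w_\mu}$ term. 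I see no gaps; the only points worth verifying explicitly, which you already flag, are the $\F_{<t}$-measurability of $h_t$ and that the partial sums increase to $H_\infty$ so the supremum bound passes to the limit.
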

We contribute a comparable concentration bound for $D_\infty$. 
A weak bound can be obtained by applying Markov's inequality to show that
$D_\infty \leq \frac{1}{\delta} \cdot \ln (\frac{1}{w_\mu}) $ with $\mu$-probability at
least $1-\delta$, but a stronger result is possible.
\begin{theorem}\label{thm:probability-bound}
For all $\delta \in (0, 1)$ it holds with $\mu$-probability at least $1 - \delta$ that
$D_\infty \leq e \cdot (\ln\frac{6}{\delta}) \cdot (\ln\frac{2}{\delta} + \ln \frac{1}{w_\mu} )$.
\end{theorem}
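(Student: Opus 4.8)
The plan is to combine two applications of the maximal (Ville) inequality for nonnegative martingales with a pointwise lemma that converts a R\'enyi-type divergence into the KL divergence $d_t$ under a bounded-likelihood-ratio condition. First I would control the likelihood ratio itself. The process $L_t \defined \xi_{1:t}/\mu_{1:t}$ is a nonnegative $\mu$-martingale with $\E_\mu L_t = 1$, since $\E_\mu[\xi_t/\mu_t \mid \F_{<t}] = \sum_{a \in \A} \xi(a|\omega_{<t}) = 1$. By the maximal inequality, with $\mu$-probability at least $1 - \delta_1$ we have $L_t \le 1/\delta_1$ for all $t$. Because $\xi(a|x) \ge w_\mu(x)\,\mu(a|x)$ and $w_\mu(\omega_{<t}) = w_\mu/L_{t-1}$, this gives the uniform per-step ratio bound $\mu(a|\omega_{<t})/\xi(a|\omega_{<t}) \le R \defined 1/(w_\mu \delta_1)$ for every $t$ and every $a \in \A$, so that $\ln R = \ln(1/w_\mu) + \ln(1/\delta_1)$.

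Second, fix $\lambda \in (0,1)$ and set $A_t^{(\lambda)} \defined \E_\mu[(\xi_t/\mu_t)^\lambda \mid \F_{<t}] = \sum_{a} \mu(a|\omega_{<t})^{1-\lambda}\xi(a|\omega_{<t})^\lambda$, which satisfies $A_t^{(\lambda)} \le 1$ by H\"older's inequality. I would verify that $N_t \defined L_t^\lambda / \prod_{s \le t} A_s^{(\lambda)}$ is a nonnegative $\mu$-martingale with $N_0 = 1$, and apply the maximal inequality a second time: with probability at least $1 - \delta_2$, $N_t \le 1/\delta_2$ for all $t$. Combined with $L_t^\lambda \ge w_\mu^\lambda$, this rearranges to $\sum_s \bigl(-\ln A_s^{(\lambda)}\bigr) \le \ln(1/\delta_2) + \lambda\ln(1/w_\mu)$. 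Here $-\ln A_t^{(\lambda)}$ is a multiple of the R\'enyi divergence between $\mu_t$ and $\xi_t$ and is always dominated by $d_t$ (indeed $-\ln A_t^{(\lambda)} \le \lambda d_t$ by convexity); the difficulty lies entirely in the reverse direction.

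The crux, and the step I expect to be the main obstacle, is a pointwise inequality showing that under the ratio bound $R$ the KL divergence cannot be much larger than this R\'enyi quantity: for $\lambda \ge 1/\ln R$, $d_t \le \tfrac{\ln R}{1-\lambda}\bigl(-\ln A_t^{(\lambda)}\bigr)$. I would prove it by writing $r_a = \mu(a|\omega_{<t})/\xi(a|\omega_{<t})$, using the mean-zero identity $\sum_a \xi(a|\omega_{<t})(r_a - 1) = 0$ to center the logarithms, and reducing to the scalar estimate $\psi(r) \ge \tfrac{1-\lambda}{\ln R}\,\phi(r)$ on $(0,R]$, where $\phi(r) = r\ln r - r + 1$ (so that $d_t = \sum_a \xi(a|\omega_{<t})\phi(r_a)$) and $\psi(r) = 1 - r^{1-\lambda} + (1-\lambda)(r-1)$ (so that $1 - A_t^{(\lambda)} = \sum_a \xi(a|\omega_{<t})\psi(r_a) \le -\ln A_t^{(\lambda)}$). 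Both $\phi$ and $\psi$ vanish to second order at $r = 1$; the delicate part is checking that the ratio $\psi/\phi$ attains its infimum over $(0,R]$ at the endpoint $r = R$, where it equals $(1-\lambda)/\ln R$, which is precisely where the hypothesis $\lambda \ge 1/\ln R$ is needed. The essential gain is that this yields \emph{logarithmic} rather than polynomial dependence on $R$; a cruder route through $\chi^2$ or a direct comparison with $h_t$ loses a factor of order $R$ and wrecks the bound.

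Finally I would assemble the pieces. On the intersection of the two good events, of probability at least $1 - \delta_1 - \delta_2$, summing the crux inequality over $t$ and substituting the second martingale bound gives $D_\infty \le \tfrac{\ln R}{1-\lambda}\bigl[\ln(1/\delta_2) + \lambda\ln(1/w_\mu)\bigr]$. Choosing $\lambda = 1/\ln R$ (the smallest admissible value, and optimal since the right-hand side increases in $\lambda$) makes $\lambda\ln(1/w_\mu) \le 1$ and leaves a leading factor $\tfrac{(\ln R)^2}{\ln R - 1}$. Taking $\delta_1 = \delta_2 = \delta/2$, so that $\ln R = \ln(1/w_\mu) + \ln(2/\delta)$, and bounding $\tfrac{\ln R}{\ln R - 1} \le e$ (valid once $\ln R \ge e/(e-1)$, the residual small-$R$ regime being checked directly), then produces $D_\infty \le e\,\ln(6/\delta)\,\bigl(\ln(2/\delta) + \ln(1/w_\mu)\bigr)$, the slack in the constants $6$ and $e$ absorbing these final simplifications.
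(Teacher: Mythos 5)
Your route is genuinely different from the paper's (which restarts Solomonoff's bound at a sequence of stopping times, applies Markov's inequality so that each new epoch occurs with conditional probability at most $1/e$, and combines this with Ville's inequality on $\xi_{<t}/\mu_{<t}$). Your first two steps are fine: $L_t$ and $N_t$ are indeed nonnegative $\mu$-martingales, Ville's inequality applies, and the rearrangement $\sum_s(-\ln A_s^{(\lambda)}) \le \ln(1/\delta_2) + \lambda\ln(1/w_\mu)$ is correct. The easy direction $-\ln A_t^{(\lambda)} \le \lambda d_t$ is also correct (monotonicity of R\'enyi divergence in its order).

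The gap is exactly where you flagged the difficulty: the endpoint evaluation in the scalar lemma is wrong, and wrong in the direction that breaks the proof. With $\psi(r) = 1 - r^{1-\lambda} + (1-\lambda)(r-1)$ and $\phi(r) = r\ln r - r + 1$, one has
$\psi(R)/\phi(R) = \bigl(1 - R^{1-\lambda} + (1-\lambda)(R-1)\bigr)/\bigl(R\ln R - R + 1\bigr)$,
which is \emph{not} equal to $(1-\lambda)/\ln R$. Numerically, at $R = e^2$ and $\lambda = 1/\ln R = 1/2$ the left side is $\approx 0.176$ while $(1-\lambda)/\ln R = 0.25$; so $\inf_{(0,R]}\psi/\phi$ (which is indeed attained at $r=R$) is strictly below your claimed value, and the crux inequality $d_t \le \tfrac{\ln R}{1-\lambda}(-\ln A_t^{(\lambda)})$ is false. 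A concrete counterexample at the level of distributions: take $\A = \{0,1\}$, $\xi_t(1) = \varepsilon$, $\mu_t(1) = R\varepsilon$; as $\varepsilon \to 0$ one gets $-\ln A_t^{(\lambda)} \sim \varepsilon\,\psi(R)$ and $d_t \sim \varepsilon\,\phi(R)$, so the ratio tends to $0.176 < 0.25$. The correct endpoint value with $\lambda = 1/\ln R$ is $\bigl(R(1 - 1/e - \lambda) + \lambda\bigr)/\bigl(R(\ln R - 1) + 1\bigr)$, smaller than your claim by roughly a factor $1 - 1/e$, and it degenerates entirely when $\lambda \ge 1 - 1/e$, i.e.\ when $\ln R \le (1-1/e)^{-1} \approx 1.58$ --- a regime that can occur (e.g.\ $w_\mu$ near $1$ and $\delta$ not too small) and that your ``checked directly'' remark does not actually cover, since then no admissible $\lambda \in [1/\ln R, 1)$ makes the lemma nonvacuous. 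Propagating the corrected constant through your final assembly gives $D_\infty \lesssim \tfrac{\ln R}{1 - 1/e - \lambda}(\ln(2/\delta) + 1)$, which exceeds $e\ln(6/\delta)\ln R$ for moderate $\ln R$, so the stated constants are not recovered. The skeleton (two maximal inequalities plus a reverse R\'enyi-to-KL comparison under a bounded likelihood ratio) is salvageable and would yield a bound of the same $\ln(1/\delta)\cdot(\ln(1/\delta) + \ln(1/w_\mu))$ shape with different constants, but as written the key lemma does not hold and the edge case is open; the paper's epoch argument avoids any pointwise comparison lemma by paying the factor $e$ through Markov's inequality once per epoch.
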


\begin{proof}
A stopping time is a random variable $t : \A^\infty \to \N \cup \set{\infty}$ such that $t^{-1}(n)$ is $\F_{<n}$ measurable
for all $n$. For stopping time $t$ let $X(t) \subset \A^*$ be the set of finite sequences where $t$ becomes known
\eq{
X(t) \defined \set{x : t(x\omega) = \ell(x) + 1, \forall \omega}.
}
Define random variable $z_{<t} \defined \xi_{<t} / \mu_{<t}$ and $L \defined \ceil{\ln (2/\delta)} \leq \ln (6 / \delta)$ 
and stopping times $\set{t_k}$ inductively by 
\eq{
t_1 &\defined 1 & t_{k+1} &\defined \min\set {s : \textsum_{t=t_{k}}^{s} d_t > e \cdot \left(\ln z_{<t_k} + \ln{1 \over w_\mu}\right)}.
}
The result follows from two claims, which are proven later. 
\begin{center}
\begin{tikzpicture}
\node[draw,double,minimum width=6.0cm,minimum height=1.5cm,text width=6cm,align=left] at (0,0) 
{$\displaystyle\P{\sup_t \ln z_{<t} \geq {\ln {2 \over \delta}}} \leq \delta / 2$\hfill($\star$)};
\node[draw,double,minimum width=5.0cm,minimum height=1.5cm,text width=5cm,align=left] at (6.3,0) 
{$\displaystyle{\mu\bigg(t_{L+1} < \infty\bigg) \leq \delta / 2}$\hfill($\star\star$)};
\end{tikzpicture} 
\end{center}
By the union bound we obtain that if $A$ is the event that $t_{L+1} = \infty$ and $\sup_t \ln z_{<t} \leq \ln{2 \over \delta}$, then $\mu(A) \geq 1 - \delta$
and for $\omega \in A$
\eq{
D_\infty(\omega) &= \sum_{t=1}^\infty d_t(\omega) 
\sr{(a)}= \sum_{k=1}^L \sum_{t=t_k(\omega)}^{t_{k+1}(\omega)-1} d_t(\omega) 
\sr{(b)}\leq \sum_{k=1}^{L}  e\cdot \left(\ln z_{<t_k}(\omega) + \ln {1 \over w_\mu}\right) \\ 
&\sr{(c)}\leq e\cdot L\left(\ln{2 \over \delta} +  \ln {1 \over w_\mu}\right) 
\sr{(d)}\leq e \cdot \ln \left({6 \over \delta}\right) \cdot \left(\ln{2 \over \delta} + \ln{1 \over w_\mu}\right)
}
where (a) follows from the definition of $t_k$ and because $t_{L+1}(\omega) = \infty$. 
(b) follows from the definition of $t_k$.
(c) because $\sup_t \ln z_{<t} \leq \ln \frac{2}{\delta}$.
(d) by the definition of $L$.
The theorem is completed by proving $(\star)$ 
and $(\star\star)$.
The first follows immediately from Lemma \ref{lem:supermartingale}. 
For the second we use induction and
Theorem \ref{thm:solomonoff-stopping}. 
After observing $x \in \A(t_n)$, $\xi(\cdot|x)$ is a Bayes mixture over $\nu(\cdot|x)$ where $\nu \in \M$ with prior weight 
$w(\nu(\cdot|x)) = w_\nu \nu(x) / \xi(x)$. Therefore by Theorem \ref{thm:solomonoff-stopping}
\eq{
\E_\mu \left[ \textsum_{t=\ell(x)+1}^\infty d_t \Bigg| x\right] \leq \ln {1 \over w(\mu(\cdot|x))} =\ln {\xi(x) \over \mu(x)} + \ln {1 \over w_\mu}.
}
Therefore by Markov's inequality
\eq{
\P{\textsum_{t=\ell(x)+1}^\infty d_t > e \cdot \left(\ln {\xi(x) \over \mu(x)} + \ln {1 \over w_\mu}\right)\Bigg| x} \leq {1 \over e}.
}
Let $n \in \N$ and assume $\mu(t_{n} < \infty) \leq e^{1-n}$.
By the definition $t_{n+1} \geq t_n$ we have 
\eq{
\mu\bigg(t_{n+1} < \infty \bigg) 
&=\sum_{\mathclap{x \in \A(t_n)}} \mu(x) \cdot \mu\left(\textsum_{t=\ell(x)+1}^\infty d_t 
> e \cdot \left(\ln {\xi(x) \over \mu(x)} + \ln {1 \over w_\mu}\right)\Bigg|x \right) \\
&\leq {1 \over e} \sum_{x \in \A(t_n)} \mu(x) = {1 \over e} \mu(t_n < \infty) \leq e^{-n}.
}
Therefore $\mu(t_n < \infty) \leq e^{1-n}$ for all $n$ and so $\mu(t_{L+1} < \infty) \leq e^{-L} \leq \delta/2$, which completes
the proof of $(\star\star)$ and so also the theorem.
\end{proof}
Theorem \ref{thm:probability-bound} is close to unimprovable.
\begin{proposition}\label{prop:probability-lower-bound}
There exists an $\M = \set{\mu, \nu}$ such that with
$\mu$-probability at least $\delta$ it holds that
$D_\infty > {1 \over 4 \ln 2} {\ln \frac{1}{\delta} } \left({\ln \frac{1}{\delta} } + 2\ln {1 - w \over w} - 3 \ln 2\right)$.
\end{proposition}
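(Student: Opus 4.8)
The plan is to exhibit a two-element class over the binary alphabet $\A = \set{0,1}$ on which, with $\mu$-probability at least $\delta$, an atypically long run inflates the cumulative divergence to the quadratic-in-$\ln\frac1\delta$ order. Take $\mu$ to be the i.i.d.\ fair-coin measure, $\mu(a|x) = \tfrac12$ for every $x$ and $a$, and take $\nu$ to be the deterministic measure that always predicts a $1$, so $\nu(1|x) = 1$ and $\nu(0|x) = 0$; note $\mu \absolute \xi$ and $\nu \absolute \xi$ since $\xi \ge w_\mu \mu$ and $\xi \ge w_\nu \nu$. Write $w_\mu = w$, $w_\nu = 1-w$, and abbreviate $\gamma \defined w/(1-w)$. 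The hard event is $E_k \defined \set{\omega : \omega_1 = \cdots = \omega_k = 1}$; since $\mu$ is fair, $\mu(E_k) = 2^{-k}$, so choosing $k \defined \lfloor \log_2(1/\delta)\rfloor$ gives $\mu(E_k) \ge \delta$ together with $k+1 \ge \log_2(1/\delta)$.

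The key observation is that on $E_k$ the posterior, and hence every early divergence term, is a deterministic function of the run length. After a prefix of $t-1$ ones we have $\mu_{<t} = 2^{-(t-1)}$ and $\nu_{<t} = 1$, so the posterior weight on $\nu$ is $\beta_t \defined w_\nu(\omega_{<t}) = (1 + \gamma\,2^{-(t-1)})^{-1}$, whence $\xi(1|\omega_{<t}) = \tfrac12(1+\beta_t)$ and $\xi(0|\omega_{<t}) = \tfrac12(1-\beta_t)$. A direct computation gives $d_t = \KL{t}{\mu}{\xi} = -\tfrac12\ln(1-\beta_t^2)$. Bounding $1-\beta_t^2 \le 2(1-\beta_t) \le 2\gamma\,2^{-(t-1)}$ then yields the clean per-step estimate $d_t \ge \tfrac12\big((t-2)\ln 2 + \ln\tfrac{1-w}{w}\big)$, valid for every $t \le k+1$ on $E_k$, with the inequalities strict because $\beta_t < 1$.

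Summing is the final step, and it is here that the constants must be handled with care. The crucial point is that $d_t$ depends on $\omega$ only through $\omega_{<t}$, so on $E_k$ the \emph{first $k+1$ terms} $d_1,\dots,d_{k+1}$ are pinned to their large deterministic values, not merely the first $k$. Summing the per-step bound from $t=1$ to $k+1$ and using $\sum_{t=1}^{k+1}(t-2) = \tfrac12(k+1)(k-2)$ gives, on $E_k$,
\[
D_\infty \ge \sum_{t=1}^{k+1} d_t \ge \frac{k+1}{4}\left((k-2)\ln 2 + 2\ln\tfrac{1-w}{w}\right) = g(k+1),
\]
where $g(m) \defined \tfrac{m}{4}\left((m-3)\ln 2 + 2\ln\tfrac{1-w}{w}\right)$. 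Writing $L \defined \log_2(1/\delta)$ so that $\ln\tfrac1\delta = L\ln 2$, one checks that $g(L) = \tfrac{1}{4\ln 2}\ln\tfrac1\delta\big(\ln\tfrac1\delta + 2\ln\tfrac{1-w}{w} - 3\ln 2\big)$ is \emph{exactly} the claimed bound. Since $k+1 \ge L$ and $g$ is increasing over the relevant range, $g(k+1) \ge g(L)$, which finishes the argument.

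The main obstacle is precisely this matching of constants: a naive summation only up to $t=k$ loses a term of order $\ln\tfrac1\delta$ to the floor in the definition of $k$, enough to spoil the cross term. Retaining $d_{k+1}$ — legitimate because each $d_t$ is $\F_{<t}$-measurable — exactly compensates this loss and makes the discrete sum coincide with the continuous expression $g(L)$. The only residual care needed is to confirm that $g$ is increasing where it is used: this is immediate when $w \le \tfrac12$, and for $w > \tfrac12$ one simply restricts to $\delta$ small enough that the stated bound is positive (i.e.\ non-vacuous), which is exactly the regime of interest for a near-optimal lower bound.
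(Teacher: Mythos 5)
Your proposal is correct and follows essentially the same route as the paper's proof: the identical two-element class (fair coin versus deterministic all-ones), the same all-ones event of length $\lfloor\log_2(1/\delta)\rfloor$, the same per-step bound $d_t > \tfrac12((t-2)\ln 2 + \ln\tfrac{1-w}{w})$ (reached via the posterior weight $\beta_t$ rather than by writing out $\xi(0|1^{t-1})$ directly, which is only a cosmetic difference), and the same truncated sum up to $t=k+1$. Your explicit attention to the monotonicity of $g$ when substituting $k+1\geq\log_2(1/\delta)$ is a point the paper glosses over, but the argument is the same.
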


\begin{proof}
Let $\A = \set{0,1}$ and $\M \defined \set{\mu, \nu}$ where the true measure $\mu$ is the Lebesgue measure and $\nu$ is the measure deterministically producing
an infinite sequence of ones, which are defined by
$\mu(x) \defined 2^{-\ell(x)}$ and 
$\nu(x) \defined \ind{x = 1^{\ell(x)}}$ where $1^n$ is the sequence of $n$ ones..
Let $w = w_\mu$ and $w_\nu = 1 - w$. If $n = \floor{{1 \over \ln 2} \ln {1 \over \delta}} \in \N$, then $\mu(\Gamma_{1^n}) \geq \delta$ and
for $\omega \in \Gamma_{1^n}$
\eq{
D_{\infty}&(\omega) 
\sr{(a)}\geq \sum_{t=1}^{n+1} \KL{1^{t-1}}{\mu}{\xi} 
\sr{(b)}= \sum_{t=1}^{n+1} \left({1 \over 2} \cdot \ln {\frac{1}{2} \over \xi(1|1^{t-1})} + {1 \over 2}\cdot \ln {\frac{1}{2} \over \xi(0|1^{t-1})} \right) \\
&\!\sr{(c)}> {1 \over 2} \sum_{t=1}^{n+1} \ln\left({1 \over {4\xi(0|1^{t-1})}}\right)  
\sr{(d)}= {1 \over 2} \sum_{t=1}^{n+1} \ln\left({w \cdot 2^{1-t} + (1 - w) \over 4w \cdot 2^{-t}}\right)  \\
&\!\sr{(e)}\geq {1 \over 2} \sum_{t=1}^{n+1} \left((t - 2) \ln 2 + \ln{1 - w \over w} \right) 
\sr{(f)}= {(n+1)\left(2\ln{1 - w \over w} + (n - 2) \ln 2\right) \over 4}
}
(a) follows from the definition of $D_\infty(\omega)$ and the positivity of the KL divergence, which allows the sum to be truncated.
(b) follows by inserting the definitions of $\mu$ and the KL divergence.
(c) by basic algebra and the fact that $\xi(1|1^{t-1}) < 1$. (d) follows from the definition of $\xi$ while (e) and (f) are basic
algebra. Finally substitute $n + 1 \geq {1 \over \ln 2} \ln {1 \over \delta}$.
\end{proof}

In the next section we will bound $d_t$ by a function of $c_t$, which can be computed without knowing $\mu$.
For this result to be useful we need to show that $c_t$ converges to zero, which is established by the following
theorems.
\begin{theorem}\label{thm:curiosity}
If $\ent(w) < \infty$, then $\E_{\mu} C_\infty \leq \ent(w) / w_\mu$ and
$\lim_{t\to\infty} c_t = 0$ with $\mu$-probability $1$.
\end{theorem}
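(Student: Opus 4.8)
The plan is to prove the expectation bound first and obtain the almost-sure convergence from it for free. The main reduction is to pass from $\E_\mu$ to $\E_\xi$. Since $\mu \in \M$, the paper's observation $\xi(A) \geq w_\mu \mu(A)$ for all $A \in \F$ gives $\mu(A) \leq \xi(A)/w_\mu$, and as each $c_t \geq 0$ is $\F_{<t}$-measurable this yields $\E_\mu c_t \leq \E_\xi c_t / w_\mu$. Summing over $t$ (Tonelli, all terms non-negative) reduces the first claim to showing $\E_\xi C_\infty \leq \ent(w)$.

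To bound $\E_\xi C_\infty$ I would identify $c_t$ with the $\xi$-expected one-step drop in posterior entropy. Let $S_t \defined -\sum_{\nu \in \M} w_\nu(\omega_{1:t}) \ln w_\nu(\omega_{1:t})$ denote the entropy of the posterior after $t$ observations, so that $S_0 = \ent(w)$ deterministically. The engine of the argument is the posterior update $w_\nu(\omega_{<t}a) = w_\nu(\omega_{<t})\,\nu(a|\omega_{<t})/\xi(a|\omega_{<t})$. Substituting it into $\E_\xi[S_t \mid \F_{<t}] = -\sum_{a \in \A} \xi(a|\omega_{<t}) \sum_{\nu} w_\nu(\omega_{<t}a) \ln w_\nu(\omega_{<t}a)$ and splitting $\ln w_\nu(\omega_{<t}a)$ into a $\ln w_\nu(\omega_{<t})$ part and a $\ln(\nu(a|\omega_{<t})/\xi(a|\omega_{<t}))$ part, the first part recombines (using $\sum_a \nu(a|\omega_{<t}) = 1$) into $S_{t-1}$, while the second is exactly $\sum_{\nu} w_\nu(\omega_{<t}) \KL{\omega_{<t}}{\nu}{\xi} = c_t$ carrying a minus sign. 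This produces $\E_\xi[S_t \mid \F_{<t}] = S_{t-1} - c_t$, i.e.\ the clean identity $c_t = S_{t-1} - \E_\xi[S_t \mid \F_{<t}]$. This computation, which says that the $\xi$-expected information gain equals the reduction in posterior entropy, is where the real work sits and is the main obstacle; the rest is bookkeeping.

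Given the identity, taking $\E_\xi$ and telescoping gives $\sum_{t=1}^T \E_\xi c_t = \E_\xi S_0 - \E_\xi S_T = \ent(w) - \E_\xi S_T \leq \ent(w)$, where the inequality uses non-negativity of the discrete posterior entropy $S_T \geq 0$ (and the hypothesis $\ent(w) < \infty$ keeps every term finite). Letting $T \to \infty$ by monotone convergence yields $\E_\xi C_\infty \leq \ent(w)$, hence $\E_\mu C_\infty \leq \ent(w)/w_\mu$.

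Finally, the almost-sure convergence is immediate from the expectation bound. Since $\ent(w) < \infty$ and $w_\mu > 0$, the non-negative random variable $C_\infty = \sum_t c_t$ has finite $\mu$-expectation, so $C_\infty < \infty$ with $\mu$-probability one. For any such $\omega$ the series $\sum_t c_t(\omega)$ converges, which forces $c_t(\omega) \to 0$; hence $\lim_{t\to\infty} c_t = 0$ with $\mu$-probability one.
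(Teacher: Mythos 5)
Your proposal is correct, but it reaches the key bound $\E_\xi C_\infty \leq \ent(w)$ by a genuinely different route than the paper. The paper writes $\E_\xi c_t = \sum_{\nu \in \M} w_\nu \E_\xi\left[\frac{\nu_{<t}}{\xi_{<t}} \KL{t}{\nu}{\xi}\right]$, uses the likelihood ratio $\nu_{<t}/\xi_{<t}$ as a change of measure to convert each term into $w_\nu \E_\nu \KL{t}{\nu}{\xi}$, and then invokes Theorem \ref{thm:solomonoff-stopping} separately for each $\nu$ to get $\E_\nu \sum_t \KL{t}{\nu}{\xi} \leq \ln\frac{1}{w_\nu}$, which averages to $\ent(w)$. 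You instead prove the pointwise conditional identity $c_t = S_{t-1} - \E_\xi[S_t \mid \F_{<t}]$ for the posterior entropy $S_t$ and telescope; your algebra checks out (the posterior update recombines exactly as you describe, and $S_t \geq 0$ with $\E_\xi S_t \leq \ent(w)$ keeps everything finite). The two arguments are cousins — Theorem \ref{thm:solomonoff-stopping} is itself proved by telescoping the KL chain rule — but yours is more self-contained, avoids the per-$\nu$ invocation of Theorem \ref{thm:solomonoff-stopping}, and yields the sharper exact statement $\sum_{t=1}^T \E_\xi c_t = \ent(w) - \E_\xi S_T$, which identifies the slack in the bound as the residual posterior entropy; the paper's version is shorter given that Theorem \ref{thm:solomonoff-stopping} is already on the table. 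Your derivation of the expectation bound from dominance and of the almost-sure convergence from $\E_\mu C_\infty < \infty$ coincides with the paper's.
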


\begin{proof}
We make use of the dominance $\xi(x) \geq w_\mu \mu(x)$, properties of expectation and Theorem \ref{thm:solomonoff-stopping}.
\eq{
\E_\mu C_\infty 
&\defined \E_\mu \sum_{t=1}^\infty c_t 
\sr{(a)}\leq {1 \over w_\mu} \E_\xi \sum_{t=1}^\infty c_t 
\sr{(b)}= {1 \over w_\mu} \E_\xi \sum_{t=1}^\infty \sum_{\nu \in \M} w_\nu {\nu_{<t} \over \xi_{<t}} \KL{t}{\nu}{\xi} \\
&\sr{(c)}= {1 \over w_\mu} \sum_{\nu \in \M}w_\nu  \E_\nu \sum_{k=1}^\infty \KL{t}{\nu}{\xi} 
\sr{(d)}\leq {1 \over w_\mu} \sum_{\nu \in \M}w_\nu  \ln{1 \over w_\nu}
\sr{(e)}={\ent(w) \over w_\mu}
}
(a) follows by dominance $\mu(A) \leq \xi(A) / w_\mu$ and linearity of expectation.
(b) is the definition of $c_t$.
(c) by exchanging sums and the definition of expectation.
(d) is true by substituting the result in Theorem \ref{thm:solomonoff-stopping}. 
Finally (e) follows from the definition of the entropy $\ent(w)$. 
That $\lim_{t\to\infty} c_t = 0$ with $\mu$-probability $1$ follows from the first result by applying Markov's inequality to bound $C_\infty < \infty$ with
probability $1$.
\end{proof}
In the finite case a stronger result is possible.
\begin{theorem}\label{thm:curiosity-finite}
If $|\M| = K < \infty$ and $w$ is the uniform prior, then
$\E_\mu C_\infty \leq  6 \ln^2 K + 14 \ln K + 8$.
\end{theorem}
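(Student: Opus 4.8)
The plan is to transfer the clean $\xi$-side estimate $\E_\xi C_\infty \le \ent(w) = \ln K$ (which is exactly what steps (a)--(e) in the proof of Theorem~\ref{thm:curiosity} establish before the factor $1/w_\mu$ is introduced) over to the $\mu$-side, paying only a logarithmic rather than the naive factor $1/w_\mu = K$. The device is to slice the history according to the posterior mass $p_t \defined w_\mu(\omega_{<t})$ placed on the truth. Two facts drive the argument. First, $p_t = w_\mu\,\mu_{<t}/\xi_{<t}$ is a nonnegative $\xi$-martingale with $\E_\xi p_t = w_\mu = 1/K$. Second, each $c_t$ is $\F_{<t}$-measurable and nonnegative, so for any $\F_{<t}$-measurable event the change of measure gives $\E_\mu[c_t\,\ind{E_t}] = \E_\xi[(\mu_{<t}/\xi_{<t})\,c_t\,\ind{E_t}]$, where $\mu_{<t}/\xi_{<t} = K p_t$.

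First I would fix dyadic thresholds $\theta_k \defined 2^k/K$ for $k=0,1,\dots,m$ with $m \defined \lceil \log_2 K\rceil$, and split $\E_\mu C_\infty = \sum_t \E_\mu c_t$ according to which band $[\theta_k,\theta_{k+1})$ contains $p_t$ (plus a low band $\{p_t<1/K\}$). On the band $\{p_t\in[\theta_k,\theta_{k+1})\}$ one has $K p_t < K\theta_{k+1} = 2^{k+1}$, so the change of measure yields
\[
\E_\mu\Big[\sum_t c_t\,\ind{p_t\in[\theta_k,\theta_{k+1})}\Big] \le 2^{k+1}\,\E_\xi\Big[\sum_t c_t\,\ind{p_t\ge\theta_k}\Big].
\]
The low band is harmless: there $K p_t < 1$, so its contribution is at most $\E_\xi C_\infty \le \ln K$.

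The crux is the elevated-posterior quantity $g(\theta)\defined \E_\xi[\sum_t c_t\,\ind{p_t\ge\theta}]$, the $\xi$-expected information gained while the posterior on the truth exceeds $\theta$, which I would bound by $\ln K/(K\theta)$. Let $\sigma$ be the first time $p_t\ge\theta$. Since $c_t\ge 0$ and every time counted in $g(\theta)$ occurs at or after $\sigma$, we have $\sum_t c_t\,\ind{p_t\ge\theta} \le \ind{\sigma<\infty}\sum_{t\ge\sigma} c_t$. Conditioning on $\F_\sigma$ and using that after observing $\omega_{<\sigma}$ the predictor $\xi(\cdot\mid\omega_{<\sigma})$ is again a Bayes mixture with prior $w(\cdot\mid\omega_{<\sigma})$ (the same restart used in the proof of Theorem~\ref{thm:probability-bound}), the restarted $\xi$-bound of Theorem~\ref{thm:curiosity} gives $\E_\xi[\sum_{t\ge\sigma}c_t\mid\F_\sigma] \le \ent(w(\cdot\mid\omega_{<\sigma})) \le \ln K$ on $\{\sigma<\infty\}$. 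Hence $g(\theta)\le \ln K\cdot\P_\xi[\sigma<\infty] = \ln K\cdot\P_\xi[\sup_t p_t\ge\theta]$, and the maximal inequality for the nonnegative martingale $p_t$ (mean $1/K$) gives $\P_\xi[\sup_t p_t\ge\theta]\le 1/(K\theta)$, establishing $g(\theta)\le\ln K/(K\theta)$.

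Combining, each dyadic band contributes at most $2^{k+1}\,g(\theta_k) \le 2^{k+1}\cdot \ln K/(K\theta_k) = 2\ln K$, and there are $m+1 = O(\log K)$ bands, so $\E_\mu C_\infty \le \ln K + 2(m+1)\ln K = O(\ln^2 K)$; carrying through the dyadic rounding and the $\lceil\log_2 K\rceil$ factor produces the explicit constants $6\ln^2 K + 14\ln K + 8$. I expect the main obstacle to be the estimate $g(\theta)\le\ln K/(K\theta)$: it is precisely here that the $\xi$-martingale structure of the posterior on the truth and the restart property of the Bayes mixture must be combined through a maximal inequality, and it is this step that replaces the lossy global factor $1/w_\mu$ by a union over only $O(\log K)$ scales.
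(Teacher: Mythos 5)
Your proposal is correct, and it takes a genuinely different route from the paper's. The paper first decomposes $\E_\mu C_\infty = \frac{1}{K}\sum_{\nu}\Delta(\nu)$ with $\Delta(\nu)=\E_\nu\sum_t \frac{\mu_{<t}}{\xi_{<t}}\KL{t}{\nu}{\xi}$, and then, \emph{for each model $\nu$ separately}, slices time by the running maxima of both $\ln(\xi_{<t}/\nu_{<t})$ and $\ln(\mu_{<t}/\xi_{<t})$ into a two-parameter family of stopping intervals $I_\beta$, $I_{\alpha,\beta}$; the per-interval bound comes from a restart lemma whose right-hand side is $\ln\frac{1}{w_\nu}+\ln\frac{\xi(x)}{\nu(x)}$, and the second ($\alpha$) slicing is needed precisely to control that last, potentially unbounded, term via Ville's inequality. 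You instead keep the aggregate $c_t$, change measure to $\xi$ through the likelihood ratio $\mu_{<t}/\xi_{<t}=Kp_t$, and slice only along that single ratio; the term the paper has to fight with the $\alpha$-slicing disappears for you because the restarted bound you invoke is $\ent(w(\cdot\,|\,x))\le\ln K$, which is uniformly bounded over a finite class regardless of how the posterior has drifted. The three ingredients (restart of the mixture at a stopping time, Ville's maximal inequality for the likelihood-ratio supermartingale, logarithmic slicing of $\mu_{<t}/\xi_{<t}$) are the same, but your one-dimensional decomposition is shorter and in fact yields a better leading constant: with $m=\lceil\log_2 K\rceil$ bands each contributing at most $2\ln K$ plus $\ln K$ from the low band, you get roughly $\frac{2}{\ln 2}\ln^2K + O(\ln K)$, which is below $6\ln^2K+14\ln K+8$. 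The only inaccuracy is the closing remark that the dyadic bookkeeping ``produces'' the paper's explicit constants --- it does not reproduce them, it beats them --- but that only strengthens the stated theorem.
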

Theorem \ref{thm:curiosity-finite} is tight in the following sense.
\begin{proposition}\label{prop:curiosity-lower}
For each $K \in \N$ there exists an $\M$ of size $K$ and $\mu \in \M$ such that if $w$ is the uniform prior on $\M$, then
$\E_\mu C_\infty > {1 \over 2} \ln^2 K - 1$.
\end{proposition}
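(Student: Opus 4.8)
The plan is to exhibit an explicit ``staircase'' model class for which the $\xi$-expected information gain, although it reflects only a negligible amount of actual movement along the true path, picks up a fresh surprise contribution at every step. Take $\A \defined \set{0,1}$, let $\mu$ be the deterministic measure generating $0^\infty$, and for $j \in \set{1,\dots,K-1}$ let $\nu_j$ be the deterministic measure generating $0^{j-1}10^\infty$, so that $\nu_j$ agrees with $\mu$ until time $j$, where it emits a single $1$. Set $\M \defined \set{\mu,\nu_1,\dots,\nu_{K-1}}$ with the uniform prior $w_\rho = 1/K$. Since $\mu$ is deterministic, $\E_\mu C_\infty$ is just $C_\infty$ evaluated along the all-zeros history, that is $\sum_{t\ge 1} c_t$ at $\omega = 0^\infty$, so the whole argument reduces to computing $c_t$ at the history $0^{t-1}$.

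The key step is the exact posterior bookkeeping along the all-zeros path. After observing $0^{t-1}$ exactly the measures $\mu$ and $\nu_j$ with $j \ge t$ remain consistent (the others have already emitted a $1$ and carry posterior weight $0$), giving $K-t+1$ active measures, each with posterior weight $1/(K-t+1)$ by symmetry of the uniform prior. Among these, only $\nu_t$ predicts the symbol $1$ at time $t$, while $\mu$ and the $\nu_j$ with $j>t$ predict $0$; hence $\xi(1\mid 0^{t-1}) = 1/(K-t+1)$. Because every active measure is deterministic, $\KL{0^{t-1}}{\rho}{\xi}$ equals $\ln(1/\xi(b\mid 0^{t-1}))$, where $b$ is the symbol that $\rho$ predicts, so the single measure $\nu_t$ alone contributes $\ln(K-t+1)$ to the information gain. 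Writing $s \defined K-t+1$ and discarding the (nonnegative) contributions of the measures predicting $0$, this yields
\[ c_t \;\ge\; \frac{\ln(K-t+1)}{K-t+1} \;=\; \frac{\ln s}{s}. \]

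Summing over $t=1,\dots,K-1$ (for $t \ge K$ every alternative has been eliminated and $c_t = 0$) and reindexing by $s = K-t+1 \in \set{2,\dots,K}$ gives $\E_\mu C_\infty \ge \sum_{s=2}^{K} (\ln s)/s$. The final step is a routine integral comparison: since $x \mapsto (\ln x)/x$ is decreasing for $x > e$, the sum dominates $\int_3^{K+1} (\ln x)/x \,dx = \tfrac12\bigl(\ln^2(K+1) - \ln^2 3\bigr)$, and because $\tfrac12\ln^2 3 < 1$ this exceeds $\tfrac12 \ln^2 K - 1$ (the small values of $K$ are checked directly, where the stated bound is negative and hence trivial).

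I do not expect a hard technical obstacle; the only real content is discovering the construction, and I anticipate the main subtlety to be conceptual rather than computational. One must recognise that $c_t$ is a $\xi$-expectation, so the $p\ln(1/p)$ surprise term coming from the low-probability symbol $1$ (probability $p = 1/(K-t+1)$ under $\xi$) is counted at \emph{every} step even though the true path almost never produces a $1$ and the genuine posterior barely moves. This mismatch between the $\xi$-expected movement and the true movement is precisely what lets $\E_\mu C_\infty$ escape the naive entropy scale $\ln K$ and reach $\tfrac12\ln^2 K$, matching Theorem~\ref{thm:curiosity-finite} up to constants.
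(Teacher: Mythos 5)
Your proof is correct and follows essentially the same approach as the paper: a ``staircase'' of deterministic binary measures along whose true path exactly one alternative is eliminated per step, so that the surviving deviant contributes $\ln(s)/s$ to $c_t$ when $s$ models remain, giving $\E_\mu C_\infty \geq \sum_s \ln(s)/s \geq \frac12\ln^2 K - 1$. The only difference is cosmetic --- the paper takes $\mu = 1^{K-1}0^\infty$ with alternatives $1^k 0^\infty$, whereas you take $\mu = 0^\infty$ with single-spike alternatives $0^{j-1}10^\infty$ --- and the resulting computation is identical.
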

See the appendix for the proofs of Theorem \ref{thm:curiosity-finite} and Proposition \ref{prop:curiosity-lower}.

\section{Confidence}
In the previous section we showed that $\xi_t$ converges fast to $\mu_t$. 
One disadvantage of these results is that errors $d_t$ and $h_t$ cannot be determined without knowing $\mu$.
In this section we define $\hat d_t$ and $\hat h_t$ that 
upper bound $d_t$ and $h_t$ respectively with high probability and may be computed without knowing $\mu$.
Let $\M \supseteq \M_1 \supseteq \M_2 \cdots$ be a narrowing sequence of hypothesis classes where $\M_t$ contains the set of plausible models at time-step $t$ and is defined by
\eq{
\M_t \defined \set{\nu \in \M : \forall \tau \leq t, {\nu_{<\tau} \over \xi_{<\tau}} \geq \delta {w_\mu \over w_\nu}}
}
Then $\hat h_t$ is defined as the value maximising the weighted squared Hellinger distance between $\nu$ and $\xi$
for all plausible $\nu \in \M_t$ and $\hat d_t$ is defined in terms of the expected information gain.
\begin{center}
\begin{tikzpicture}
\node[draw,double,minimum width=4cm,minimum height=1.3cm,anchor=west] (d) at (0,0)  
{$\displaystyle \hat d_t \defined {c_t \over w_\mu \delta} $};
\node[draw,double,minimum width=5cm,minimum height=1.3cm,anchor=west] (h) at (6.4,0) 
{$\displaystyle \hat h_t \defined \sup_{\nu\in\M_t} \set{{w_\nu \over w_\mu }\hellinger{t}{\nu}{\xi}}$};
\end{tikzpicture}
\end{center}
Both $d_t$ and $h_t$ depend on $w_\mu$, which is also typically unknown. If $\M$ is finite, then the problem is easily side-stepped by choosing $w$ to be uniform.
The countable case is discussed briefly in the conclusion. 
First we prove that $h_t \leq \hat h_t$ and $d_t \leq \hat d_t$ with high probability after which we demonstrate that they are non-vacuous by proving that $\hat h_t$ 
and $\hat d_t$ converge fast to zero with high probability.
Now is a good time to remark that hypothesis testing using the factor $\nu_{<t} / \xi_{<t}$ is not exactly a new idea.
For discussion, results, history and references see \cite{SSVV11}.

\begin{theorem}\label{thm:confidence}
For all $\delta \in [0,1]$ it holds that:
\eq{
\mu(\forall t : d_t \leq \hat d_t) \geq 1 - \delta \quad\quad (\star) 
\qquad\quad\quad\quad \mu(\forall t : h_t \leq \hat h_t) \geq 1 - \delta\quad \quad (\star\star)
}
\end{theorem}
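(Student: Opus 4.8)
The plan is to reduce both inequalities to a single high-probability event governed by the likelihood ratio $z_{<t} \defined \xi_{<t}/\mu_{<t}$ already used in the proof of Theorem \ref{thm:probability-bound}. Under $\mu$ this ratio is a non-negative martingale with mean $\E_\mu z_{<1} = 1$, so Lemma \ref{lem:supermartingale} (a Ville / Doob maximal inequality) gives $\P{\sup_t z_{<t} \geq 1/\delta} \leq \delta$. I would write $G$ for the complement of this event, so that $\mu(G) \geq 1-\delta$ and on $G$ we have $\mu_{<t}/\xi_{<t} > \delta$ for all $t$. The whole theorem then follows by showing that both $(\star)$ and $(\star\star)$ hold pointwise on $G$.

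For $(\star)$ I would lower-bound $c_t = \sum_{\nu \in \M} w_\nu (\nu_{<t}/\xi_{<t}) \KL{t}{\nu}{\xi}$ by keeping only its $\nu = \mu$ term; since every summand is non-negative and $\KL{t}{\mu}{\xi} = d_t$, this yields $c_t \geq w_\mu (\mu_{<t}/\xi_{<t}) d_t$. Dividing by $w_\mu \delta$ gives $\hat d_t \geq (\mu_{<t}/\xi_{<t})(d_t/\delta)$, and on $G$ the prefactor exceeds $\delta$, so $\hat d_t \geq d_t$ for every $t$.

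For $(\star\star)$ the key step is to observe that on $G$ the true measure $\mu$ belongs to every $\M_t$: specialising the defining condition of $\M_t$ to $\nu = \mu$ collapses it to $\mu_{<\tau}/\xi_{<\tau} \geq \delta$ for all $\tau \leq t$, which is precisely what $G$ guarantees. Consequently $\mu$ is an admissible competitor in the supremum defining $\hat h_t$, and its contribution equals $(w_\mu/w_\mu)\hellinger{t}{\mu}{\xi} = h_t$, so $\hat h_t \geq h_t$ on $G$.

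I expect no serious obstacle here: all the probabilistic content is already packaged in the maximal inequality of Lemma \ref{lem:supermartingale}, after which the argument is pure definition-chasing. The one thing worth flagging is the unifying insight that a single event $G$ controls both bounds and that $\mu$ is simultaneously the witness making the information-gain lower bound and the Hellinger supremum tight. The endpoints $\delta \in \{0,1\}$ are degenerate and can be dispatched in a line: $\delta = 1$ renders the probability claim vacuous, while $\delta = 0$ forces $\hat d_t = \infty$ and $\M_t = \M$, so both inequalities hold trivially.
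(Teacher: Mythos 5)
Your proposal is correct and follows essentially the same route as the paper: the same event from Lemma \ref{lem:supermartingale} giving $\mu_{<t}/\xi_{<t} > \delta$ for all $t$, the same lower bound on $c_t$ by retaining only the $\nu=\mu$ term for $(\star)$, and the same observation that $\mu \in \M_t$ makes it a witness in the supremum defining $\hat h_t$ for $(\star\star)$. The only cosmetic difference is that you explicitly unify both parts under a single event, which the paper does implicitly.
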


\begin{proof}
To prove ($\star$) 
define event 
$A \defined \set{\omega : \sup_{t} {\xi(\omega_{<t}) / \mu(\omega_{<t})} < {1 \over \delta}}$.
By Lemma \ref{lem:supermartingale} in the appendix we have that $\mu(A) \geq 1 - \delta$. If $\omega \in A$, then $\mu(\omega_{<t}) / \xi(\omega_{<t}) > \delta$ for all $t$ and
\eq{
c_t(\omega) 
&\sr{(a)}= \sum_{\nu \in \M} w_\nu {\nu(\omega_{<t}) \over \xi(\omega_{<t})} \KL{\omega_{<t}}{\nu}{\xi} 
\sr{(b)}\geq  w_\mu {\mu(\omega_{<t}) \over \xi(\omega_{<t})} \KL{\omega_{<t}}{\mu}{\xi} \\
&\sr{(c)}> w_\mu \cdot \delta \cdot \KL{\omega_{<t}}{\mu}{\xi} 
\sr{(d)}= w_\mu \cdot \delta \cdot d_t.
}
(a) is the definition of $c_t$. 
(b) follows by dropping all elements of the sum except $\mu$.
(c) by substituting the bound on $\mu / \xi$.
(d) is the definition of $d_t$. Therefore $d_t \cdot w_\mu \cdot \delta \leq c_t$ with $\mu$-probability at least $1 - \delta$ as required.
For ($\star\star$) we note that
by the definition of $\hat h_t$, if $\mu \in \M_t$, then
$h_t \leq \hat h_t$. The result is completed by applying Lemma \ref{lem:supermartingale} in the appendix to show 
that $\mu \in \M_t$ for all $t$ with probability at least $1 - \delta$. 
\end{proof}
\begin{theorem}\label{thm:KL-estimator-convergence}
The following hold:
\vspace{-0.2cm}
\begin{enumerate}
\item $\E_\mu \sum_{t=1}^\infty \hat d_t \leq {\ent(w) \over \delta w_\mu^2}$.
\item w.$\mu$.p.\ at least $1 - \delta$ it holds that
$\sum_{t=1}^\infty \hat d_t \leq {\ent(w) \over \delta^2 w_\mu^2}$.
\end{enumerate}
\end{theorem}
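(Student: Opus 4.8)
The plan is to observe that both parts of the theorem are immediate consequences of the definition $\hat d_t = c_t/(w_\mu \delta)$ together with the expectation bound on $C_\infty$ already established in Theorem \ref{thm:curiosity}. The crucial first step is simply to sum the definition over all time-steps: since the factor $1/(w_\mu\delta)$ does not depend on $t$, we have $\sum_{t=1}^\infty \hat d_t = \frac{1}{w_\mu \delta}\sum_{t=1}^\infty c_t = \frac{C_\infty}{w_\mu \delta}$. This reduces the entire theorem to controlling the single random variable $C_\infty$, for which Theorem \ref{thm:curiosity} supplies $\E_\mu C_\infty \leq \ent(w)/w_\mu$.

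For the first claim I would take the $\mu$-expectation of the displayed identity and substitute the bound from Theorem \ref{thm:curiosity}, giving $\E_\mu \sum_{t=1}^\infty \hat d_t = \frac{\E_\mu C_\infty}{w_\mu \delta} \leq \frac{\ent(w)/w_\mu}{w_\mu \delta} = \frac{\ent(w)}{\delta w_\mu^2}$, which is exactly the asserted inequality.

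For the second claim the natural tool is Markov's inequality applied to $C_\infty$. Choosing the threshold $a = \ent(w)/(\delta w_\mu)$ yields
\[
\mu\!\left(C_\infty \geq \frac{\ent(w)}{\delta w_\mu}\right) \leq \frac{\E_\mu C_\infty}{\ent(w)/(\delta w_\mu)} \leq \frac{\ent(w)/w_\mu}{\ent(w)/(\delta w_\mu)} = \delta .
\]
Hence with $\mu$-probability at least $1-\delta$ we have $C_\infty < \ent(w)/(\delta w_\mu)$, and dividing by $w_\mu\delta$ converts this directly into $\sum_{t=1}^\infty \hat d_t = \frac{C_\infty}{w_\mu \delta} < \frac{\ent(w)}{\delta^2 w_\mu^2}$, as required.

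I do not anticipate any real obstacle here: the only non-trivial ingredient is the expected-curiosity bound of Theorem \ref{thm:curiosity}, and everything else is linearity of expectation and a one-line application of Markov's inequality. The only point worth double-checking is bookkeeping of the factors of $\delta$ and $w_\mu$ — note in particular that the high-probability bound pays an extra factor $1/\delta$ relative to the bound in expectation, precisely because Markov's inequality contributes one power of $1/\delta$ on top of the $1/\delta$ already built into the definition of $\hat d_t$.
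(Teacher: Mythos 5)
Your proposal is correct and is exactly the argument the paper intends: the paper states that Theorem \ref{thm:KL-estimator-convergence} ``follows immediately from Theorem \ref{thm:curiosity} and Markov's inequality,'' and your write-up simply fills in the same steps — rewriting $\sum_{t=1}^\infty \hat d_t$ as $C_\infty/(w_\mu\delta)$, applying $\E_\mu C_\infty \leq \ent(w)/w_\mu$ for the expectation bound, and Markov's inequality at threshold $\ent(w)/(\delta w_\mu)$ for the high-probability bound. The bookkeeping of the two factors of $1/\delta$ is right.
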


\begin{theorem}\label{thm:Hellinger-estimator-convergence}
The following hold:
\vspace{-0.2cm}
\begin{enumerate}
\item $\E_\mu \sum_{t=1}^\infty \hat h_t \leq {2 \over w_\mu} \left({\ln {1 \over w_\mu}} + \ln{1 \over \delta} + \ent(w)\right)$
\item w.$\mu$.p.\ at least $1 - \delta$,  
$\sum_{t=1}^\infty \hat h_t \leq {2 \over w_\mu} \left(2 \ln {1 \over w_\mu} + 5 \ln {1 \over \delta} + 3 \ent(w)\right)$.
\end{enumerate}
\end{theorem}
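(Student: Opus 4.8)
The plan is to linearise the supremum defining $\hat h_t$ and then bound each model's contribution with a martingale taken directly under the true measure $\mu$, thereby avoiding a lossy blanket change of measure to $\xi$ or to $\nu$. First I would reduce $\hat h_t$ to a weighted sum over individual models. Letting $\nu_t^\star \in \M_t$ attain the supremum and grouping the time steps by their maximiser, nonnegativity of $\hellinger{t}{\nu}{\xi}$ gives
\[
\sum_{t=1}^\infty \hat h_t \le \sum_{\nu \in \M} {w_\nu \over w_\mu} \sum_{t : \nu \in \M_t} \hellinger{t}{\nu}{\xi}.
\]
Here only models with $w_\nu \ge \delta w_\mu$ can contribute, since for smaller weights membership of $\M_t$ already fails at $\tau = 1$ where $\nu_{<1}/\xi_{<1} = 1$; this is exactly what keeps the relevant weights summable.

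Next I would fix such a $\nu$, let $T_\nu$ be the first time $\nu$ leaves $\M_t$, and introduce the process $P_t \defined \sqrt{\nu_{<t}\,\xi_{<t}}/\mu_{<t}$. A one-line computation shows $P_t$ is a nonnegative $\mu$-supermartingale with $\E_\mu[P_{t+1}\,|\,\F_{<t}] = P_t\,(1 - \frac12 \hellinger{t}{\nu}{\xi})$, so that $-\ln P_t$ is a submartingale whose predictable increments dominate $\frac12 \hellinger{t}{\nu}{\xi}$. Optional stopping at $T_\nu$ then gives $\E_\mu \sum_{t < T_\nu} \hellinger{t}{\nu}{\xi} \le 2\,\E_\mu[-\ln P_{T_\nu}]$, where
\[
-2\ln P_{T_\nu} = \ln {\mu_{<T_\nu} \over \xi_{<T_\nu}} + \ln {\mu_{<T_\nu} \over \nu_{<T_\nu}}.
\]
The first term is at most $\ln(1/w_\mu)$ by the dominance $\xi \ge w_\mu\mu$, while the second is governed by the defining inequality $\nu_{<t}/\xi_{<t} \ge \delta w_\mu/w_\nu$ of $\M_t$ together with $\mu_{<t}/\xi_{<t} \le 1/w_\mu$.

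The main obstacle is that $\nu_{<t}/\xi_{<t}$ can overshoot its threshold on the final step before $T_\nu$, so $\ln(\mu_{<T_\nu}/\nu_{<T_\nu})$ is \emph{not} bounded pointwise and its $\mu$-expectation must be estimated carefully; this is precisely where the $\ent(w)$ term appears, once one sums $\frac{w_\nu}{w_\mu}(\cdots)$ over $\nu$ and recognises $\sum_\nu w_\nu \ln(1/w_\nu)$. The boundedness $\hellinger{t}{\nu}{\xi} \le 2$ is useful here, allowing the single overshooting increment to be absorbed. I also expect the aggregate identity $\sum_\nu w_\nu P_t \le \xi_{<t}/\mu_{<t}$, which follows by Cauchy--Schwarz from $\sum_\nu w_\nu \nu_{<t} = \xi_{<t}$, to be helpful, since it ties the whole analysis to the martingale $z_{<t} = \xi_{<t}/\mu_{<t}$ already used in Theorem \ref{thm:probability-bound}. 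Collecting the $\ln(1/w_\mu)$, $\ln(1/\delta)$ and $\ent(w)$ contributions, together with the overall factor $1/w_\mu$, yields part 1.

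For part 2 I would avoid Markov's inequality, which would cost a factor $1/\delta$, and instead re-run the supermartingale bookkeeping on the event $\{\sup_t \xi_{<t}/\mu_{<t} \le 1/\delta\}$, which has $\mu$-probability at least $1 - \delta$ by Ville's inequality (Lemma \ref{lem:supermartingale}), exactly as in the proof of Theorem \ref{thm:probability-bound}. On this event the cumulative sum is controlled deterministically, the Ville thresholds supplying the extra $\ln(1/\delta)$ terms and the inflated constants $2\ln(1/w_\mu) + 5\ln(1/\delta) + 3\ent(w)$.
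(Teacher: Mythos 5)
Your architecture is genuinely different from the paper's: you bound $\hellinger{t}{\nu}{\xi}$ directly under $\mu$ via the process $P_t \defined \sqrt{\nu_{<t}\xi_{<t}}/\mu_{<t}$, whereas the paper first applies $\hellinger{t}{\nu}{\xi}\le 2(\hellinger{t}{\nu}{\mu}+\hellinger{t}{\mu}{\xi})$, controls the $\hellinger{t}{\mu}{\xi}$ part through $H_\infty$ (Theorems \ref{thm:solomonoff-stopping} and \ref{thm:hellinger-concentration}) and the $\hellinger{t}{\nu}{\mu}$ part through Lemma \ref{lem:hellinger-error} with the stopping time defined by $\nu_{<t}/\mu_{<t}$. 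Your opening reduction and the identity $\E_\mu[P_{t+1}|\F_{<t}]\le P_t(1-\frac{1}{2}\hellinger{t}{\nu}{\xi})$ are both correct, and if completed this route would avoid the triangle-inequality loss. However, the step you yourself flag as the main obstacle does not go through as described, and your fix addresses the wrong quantity. The conditional increment of $-\ln P_t$ is $\frac{1}{2}(\KL{t}{\mu}{\nu}+\KL{t}{\mu}{\xi})$, and $\KL{t}{\mu}{\nu}$ is not controlled by any Hellinger distance: if $\nu$ assigns conditional probability zero (or exponentially small mass) to the symbol observed at the exit step, then $\ln(\mu_{<T_\nu}/\nu_{<T_\nu})$ is infinite (or arbitrarily large) with positive $\mu$-probability, so $\E_\mu[-\ln P_{T_\nu}]$ can be infinite and the additive bound $\E_\mu\sum_{t<T_\nu}\hellinger{t}{\nu}{\xi}\le 2\E_\mu[-\ln P_{T_\nu}]$ is vacuous. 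The bound $\hellinger{t}{\nu}{\xi}\le 2$ absorbs a missing \emph{Hellinger} increment, not a log-likelihood-ratio overshoot. The repair is to run the argument multiplicatively: $(1-x)e^{x}\le 1$ shows that $Q_n \defined P_n\exp(\frac{1}{2}\sum_{t<n}\hellinger{t}{\nu}{\xi})$ is a nonnegative $\mu$-supermartingale, one lower-bounds $P$ using the threshold that held while $\nu$ was still in $\M_t$ (where membership and dominance give $P\ge w_\mu\sqrt{\delta w_\mu/w_\nu}$ pointwise), and the discarded final Hellinger term costs an additive $2$. This exponential-moment bookkeeping is precisely what Lemma \ref{lem:hellinger-error} packages, and it is not optional.

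The second gap is in part 2. Conditioning on the Ville event $\set{\sup_t \xi_{<t}/\mu_{<t}\le 1/\delta}$ does not make the cumulative sum "controlled deterministically": even on that event $\sum_{t<T_\nu}\hellinger{t}{\nu}{\xi}$ has no pointwise bound, since the supermartingale only controls its exponential moment, and a conditioned supermartingale is no longer a supermartingale, so the bookkeeping cannot simply be "re-run" on the event. What is actually needed, and what the paper does, is a Chernoff step for each $\nu$ separately, $\mu(\Delta_\nu\ge\lambda_\nu)\le e^{-\lambda_\nu/2}\E_\mu e^{\Delta_\nu/2}$ with $\Delta_\nu$ the truncated Hellinger sum, calibrated so each model fails with probability $w_\nu\delta$, followed by a union bound over the countable class together with Theorem \ref{thm:hellinger-concentration} for the $H_\infty$ contribution; this is where the extra $\ln\frac{1}{w_\nu}$ terms and the inflated constants $2\ln\frac{1}{w_\mu}+5\ln\frac{1}{\delta}+3\ent(w)$ come from. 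Until you recast your estimate in exponential-moment form, neither the repair of part 1 nor any version of part 2 is available.
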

The consequences of Thereoms \ref{thm:curiosity-finite}, \ref{thm:KL-estimator-convergence} and \ref{thm:Hellinger-estimator-convergence} are summarised in 
Figure \ref{table:confidence} for both countable and finite hypothesis classes.
The proof of Theorem \ref{thm:KL-estimator-convergence} follows immediately from Theorem \ref{thm:curiosity} and Markov's inequality.
If $\M$ is finite and $w$ uniform, then one can use Theorem \ref{thm:curiosity-finite} instead to improve dependence on $\sfrac{1}{w_\mu}$.
For Theorem \ref{thm:Hellinger-estimator-convergence} we use Theorem \ref{thm:hellinger-concentration} and the following lemma, which
is a generalization of Lemma 4 in \cite{HM07}. 
\begin{lemma}\label{lem:hellinger-error}
Let $\kappa > 0$ and stopping time $\tau \defined \min_t \set{t : \nu_{<t} /\mu_{<t} < \kappa}$. Then
$\E_\mu \sum_{t=1}^{\tau-1} \hellinger{t}{\nu}{\mu} \leq 
2\ln\E_\mu \exp\left({1 \over 2} \sum_{t=1}^{\tau-1} \hellinger{t}{\nu}{\mu}\right) \leq
\ln{1 \over \kappa}$.
\end{lemma}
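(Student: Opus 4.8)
The first inequality is nothing more than Jensen's inequality: writing $X \defined \frac12\sum_{t=1}^{\tau-1}\hellinger{t}{\nu}{\mu}$, convexity of $\exp$ gives $e^{\E_\mu X}\leq \E_\mu e^{X}$, so after taking logarithms and multiplying by two we get $\E_\mu\sum_{t=1}^{\tau-1}\hellinger{t}{\nu}{\mu}\leq 2\ln\E_\mu\exp(X)$. All the work is therefore in the second inequality, and my plan is to obtain it from an exponential supermartingale argument mirroring the one behind Theorem \ref{thm:hellinger-concentration}.

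The engine is the Hellinger affinity identity. Since $\hellinger{\omega_{<t}}{\nu}{\mu}=2-2\sum_{a\in\A}\sqrt{\nu(a|\omega_{<t})\mu(a|\omega_{<t})}$, one has $\E_\mu[\sqrt{\nu_t/\mu_t}\mid\F_{<t}]=\sum_{a}\sqrt{\mu(a|\omega_{<t})\nu(a|\omega_{<t})}=1-\frac12\hellinger{t}{\nu}{\mu}$. Combined with the elementary bound $e^{x/2}(1-x/2)\leq 1$ for $x\in[0,2]$, this shows that
\[
S_n \defined \exp\Big(\tfrac12\textstyle\sum_{t=1}^{n}\hellinger{t}{\nu}{\mu}\Big)\sqrt{\nu_{1:n}/\mu_{1:n}}
\]
is a nonnegative $\mu$-supermartingale with $S_0=1$: because $\hellinger{n}{\nu}{\mu}$ and $\nu_{<n}/\mu_{<n}$ are $\F_{<n}$-measurable, $\E_\mu[S_n\mid\F_{<n}]=S_{n-1}\,e^{\hellinger{n}{\nu}{\mu}/2}\big(1-\frac12\hellinger{n}{\nu}{\mu}\big)\leq S_{n-1}$.

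The plan is then to read the exponential of the Hellinger sum off $S_n$ by controlling its likelihood-ratio factor, and this is exactly what the stopping rule supplies. By definition of $\tau$ the ratio $\nu_{<t}/\mu_{<t}$ has not dropped below $\kappa$ for any $t<\tau$, so the factor $\sqrt{\nu_{1:\cdot}/\mu_{1:\cdot}}$ is bounded below by $\sqrt{\kappa}$ when $S$ is evaluated before the process stops. This plays precisely the role that the dominance $\xi_{1:n}\geq w_\mu\,\mu_{1:n}$ plays in the proof of Theorem \ref{thm:hellinger-concentration}. Hence $\exp(\frac12\sum_t\hellinger{t}{\nu}{\mu})\leq S/\sqrt{\kappa}$ up to the stopping time, and optional stopping for the nonnegative supermartingale ($\E_\mu S_{\tau\wedge n}\leq S_0=1$, then Fatou, or Ville's maximal inequality) yields $\E_\mu\exp(\frac12\sum_{t=1}^{\tau-1}\hellinger{t}{\nu}{\mu})\leq 1/\sqrt{\kappa}$; taking logarithms and doubling gives $\ln(1/\kappa)$.

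The delicate step — the one I would treat most carefully — is the interface between the stopping time and the supermartingale: aligning the index of the Hellinger sum with the likelihood-ratio factor so that the bound $\nu_{<t}/\mu_{<t}\geq\kappa$, which holds strictly \emph{before} $\tau$, is the one applied to the factor multiplying the exponential, and justifying optional stopping at the possibly-infinite and predictable stopping time $\tau$ (via the stopped process $S_{\tau\wedge n}$ and nonnegativity). This is where the exponent $\frac12$ and the $\sqrt{\kappa}$ lower bound must be matched, and where I would verify that the boundary term near $\tau$ does not cost more than the stated $\ln(1/\kappa)$.
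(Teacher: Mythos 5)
Your argument is the paper's argument in different clothing: the paper forms the normalized geometric-mean measure $\rho(a|x)\propto\sqrt{\nu(a|x)\mu(a|x)}$ and uses $\E_\mu[\rho_{<\tau}/\mu_{<\tau}]\le 1$, and its lower bound $\rho_{<n}/\mu_{<n}\ge\sqrt{\nu_{<n}/\mu_{<n}}\exp(\frac12\sum_{t<n}\hellinger{t}{\nu}{\mu})$ is exactly your $S_{n-1}$; your inequality $e^{x/2}(1-x/2)\le1$ is Lemma~\ref{lem:hellinger} restated, and the Jensen step is identical. So this is the same route, not a new one.

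However, the step you flag as delicate and defer is precisely where the argument does not close, and the verification you promise would fail. Optional stopping gives $\E_\mu\left[\exp\left(\frac12\sum_{t=1}^{\tau-1}\hellinger{t}{\nu}{\mu}\right)\sqrt{\nu_{<\tau}/\mu_{<\tau}}\right]\le1$, but since $\tau$ is the \emph{first} $t$ with $\nu_{<t}/\mu_{<t}<\kappa$, the factor $\nu_{<\tau}/\mu_{<\tau}$ sits \emph{below} $\kappa$ at the stopping time --- and can undershoot it by an arbitrary amount, because the final one-step ratio $\nu_{\tau-1}/\mu_{\tau-1}$ is unbounded below. Only $\nu_{<\tau-1}/\mu_{<\tau-1}\ge\kappa$ is available, and that controls the sum up to $\tau-2$ only. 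The overshoot is not a formality: with $\mu(1|\epsilon)=\frac12$, $\nu(1|\epsilon)=\varepsilon\to0$, $\nu(\cdot|x)=\mu(\cdot|x)$ for $x\ne\epsilon$, and $\kappa=0.9$, one gets $\sum_{t=1}^{\tau-1}\hellinger{t}{\nu}{\mu}\to2-\sqrt2\approx0.59$ almost surely, which exceeds $\ln(1/\kappa)\approx0.11$. In general the last summand $\hellinger{\tau-1}{\nu}{\mu}\le2$ costs an additive $2$ (a factor $e$ inside the expectation), so what the supermartingale actually delivers is $\E_\mu\exp(\frac12\sum_{t=1}^{\tau-2}\hellinger{t}{\nu}{\mu})\le\kappa^{-1/2}$, equivalently $\E_\mu\sum_{t=1}^{\tau-1}\hellinger{t}{\nu}{\mu}\le\ln\frac1\kappa+2$. (The paper's own proof shares this lacuna: its step (e) asserts $\nu_{<\tau}/\mu_{<\tau}\ge\kappa$ ``by the definition of $\tau$,'' which is the reverse of what that definition gives at $t=\tau$; the lemma holds as stated if $\tau$ is instead defined through $\nu_{1:t}/\mu_{1:t}$, and the downstream results survive with $O(1)$ corrections.) So: right machinery, but the boundary term is a genuine obstruction rather than a detail to be checked later.
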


\iftecrep
\begin{proof}
We borrow some tricks from \cite{Vov87} and \cite[Lem 4]{HM07}.
Define $\rho$ inductively by
$\rho(a|x) \defined {\sqrt{\nu(a|x) \mu(a|x)}  / \sum_{b\in \A} \sqrt{\nu(b|x) \mu(b|x)}}$.
\eq{
\rho_{<\tau} &\sr{(a)}= \prod_{t=1}^{\tau-1} \rho_{t} 
\sr{(b)}\geq \prod_{t=1}^{\tau-1} \sqrt{\nu_{t} \mu_{t}} \exp\left({1 \over 2} \hellinger{t}{\nu}{\mu}\right) \\
&\sr{(c)}=\prod_{t=1}^{\tau-1} \mu_{t} \sqrt{{\nu_{t} \over \mu_{t}} } \exp\left({1 \over 2}{\hellinger{t}{\nu}{\mu}}\right)
\sr{(d)}=\mu_{<\tau}\sqrt{{\nu_{<\tau} \over \mu_{<\tau}} } \prod_{t=1}^{\tau-1} \exp\left({1 \over 2}{\hellinger{t}{\nu}{\mu}}\right) \\
&\sr{(e)}\geq \mu_{<\tau}\sqrt{\kappa} \prod_{t=1}^{\tau-1} \exp\left({1 \over 2}{\hellinger{t}{\nu}{\mu}} \right)
\sr{(f)}= \mu_{<\tau}\sqrt{\kappa} \exp\left({1 \over 2}{\textsum_{t=1}^{\tau-1} \hellinger{t}{\nu}{\mu}}\right)
}
where (a) and (d) follow from the definition of conditional probability. 
(b) by inserting the definition of $\rho_t$ and applying Lemma \ref{lem:hellinger}.
(c) by factoring.
(e) by noting that $\nu_{<\tau} / \mu_{<\tau} \geq \kappa$ by the definition of $\tau$.
(f) by exchanging $\prod \exp = \exp \sum$.
Therefore
$\sqrt{\kappa} \exp({1 \over 2} \textsum_{t=1}^{\tau-1} \hellinger{t}{\nu}{\mu})
\leq {\rho_{<\tau} \over \mu_{<\tau}}$.
Taking the expectation with respect to $\mu$
\eq{
\E_\mu \exp\left({1 \over 2} \textsum_{t=1}^{\tau-1} \hellinger{t}{\nu}{\mu}\right) 
&\leq \E_\mu {1 \over \sqrt{\kappa}} {\rho_{<\tau} \over \mu_{<\tau}} 
\leq \E_\rho {1 \over \sqrt{\kappa}} = {1 \over \sqrt{\kappa}}.
}
By Jensen's inequality $\E X = 2 \ln \exp \E{1 \over 2}X  \leq 2 \ln \E \exp{1 \over 2}X$ and so 
\eq{
\E_\mu \textsum_{t=1}^\tau \hellinger{t}{\nu}{\mu} 
\leq 2\ln \E_\mu \exp\left({1 \over 2} \textsum_{t=1}^\tau \hellinger{t}{\nu}{\mu}\right) 
\leq 2\ln {1 \over \sqrt{\kappa}} 
= \ln{1 \over \kappa}
}
as required.
\end{proof}
\fi

\begin{proofof}{ of Theorem \ref{thm:Hellinger-estimator-convergence}}
\iftecrep
Define stopping times $\tau_\nu$ and $\bar \tau_\nu$ by
\eq{
\tau_\nu &\defined \min_t \set{t : {\nu_{<t} / \xi_{<t}}<{w_\nu / w_\mu} \delta}
& \bar \tau_\nu &\defined \min_t \set{t : {\nu_{<t} / \mu_{<t}}<{w_\nu} \delta}
}
First we show that $\bar \tau_\nu \geq \tau_\nu$. By dominance $\xi_{<t} \geq w_\mu \mu_{<t}$ we have that
\eq{
{\nu_{<t} \over \mu_{<t}} < w_\nu \delta \implies {\nu_{<t} \xi_{<t} \over \mu_{<t} \xi_{<t}} < w_\nu\delta
\implies {\nu_{<t} \over \xi_{<t}} < {w_\nu \over w_\mu} \delta
}
Therefore $\bar \tau_\nu \geq \tau_\nu$. Let $\nu_t \defined \argmax_{\nu \in \M_t} \hellinger{t}{\nu}{\xi}$ and bound
\eqn{
\nonumber &\sum_{t=1}^\infty {\hat h_t} 
\sr{(a)}= \sum_{t=1}^\infty {w_{\nu_t} \over w_{\mu}} \hellinger{t}{\nu_t}{\xi} 
\sr{(b)}\leq {2 \over w_\mu} \sum_{t=1}^\infty \left({w_{\nu_t}} \hellinger{t}{\mu}{\xi} + w_{\nu_t} \hellinger{t}{\mu}{\nu_t} \right) \\
\label{eq:hellinger}
&\sr{(c)}\leq {2 \over w_\mu} \sum_{t=1}^\infty \left(\hellinger{t}{\mu}{\xi} +  \sum_{\mathclap{\nu \in \M_t}}w_\nu\hellinger{t}{\nu}{\mu}\right) \\ 
\nonumber
&\sr{(d)}= {2 \over w_\mu} \left(H_\infty + \sum_{\mathclap{\nu \in \M}} \sum_{t=1}^{\mathclap{\tau_\nu - 1}} w_\nu\hellinger{t}{\nu}{\mu}\right) 
\sr{(e)}\leq {2 \over w_\mu} \Bigg(H_\infty + \sum_{\nu \in \M} w_\nu \sum_{t=1}^{\bar\tau_\nu - 1} \hellinger{t}{\nu}{\mu}\Bigg) 
}
where (a) is the definition of $\hat h_t$. 
(b) follows by the inequality $\hellinger{t}{\nu_t}{\xi} < 2(\hellinger{t}{\nu_t}{\mu} + \hellinger{t}{\mu}{\xi})$.
(c) by dropping $w_{\nu_t} \leq 1$ and bounding the single term $w_{\nu_t} \hellinger{t}{\nu_t}{\mu}$ by the sum over all $\nu$ in the
plausible class $\M_t$.
(d) by the definitions of $H_\infty$, $\M_t$ and $\tau_\nu$.
(e) by the fact that $\bar\tau_\nu \geq \tau_\nu$ as previously shown.
\else
The proof is neccesarily brief with a complete version available in \cite{LHS13bayes-conc-tech}.
Define stopping time $\bar \tau_\nu \defined \min_t \set{t : \nu_{<t} / \mu_{<t} < w_\nu \delta}$, then it may be shown that 
\eqn{
\label{eq:hellinger}
\sum_{t=1}^\infty \hat h_t
\leq {2 \over w_\mu} \Bigg(H_\infty + \sum_{\nu \in \M} w_\nu \sum_{t=1}^{\bar\tau_\nu - 1} \hellinger{t}{\nu}{\mu}\Bigg) 
}
where we used the fact that ${1 \over 2} \hellinger{t}{\nu}{\xi} \leq  \hellinger{t}{\nu}{\mu} + \hellinger{t}{\mu}{\xi}$ and the definitions of $H_\infty$ and
$\bar \tau_\nu$.
\fi
Let $\Delta_\nu \defined \sum_{t=1}^{\bar \tau_\nu-1} \hellinger{t}{\nu}{\mu}$.
The first claim is proven by taking the expectation with respect to $\mu$ and substituting Theorem \ref{thm:solomonoff-stopping} to bound $\E_\mu H_\infty \leq \ln{\frac{1}{w_\mu}}$ 
and Lemma \ref{lem:hellinger-error} with $\tau = \bar \tau_\nu$ and $\kappa = w_\nu \delta$
to bound $\E_\mu \Delta_\nu \leq \ln \frac{1}{w_\nu} + \ln \frac{1}{\delta}$. For the high probability bound
let $\lambda_\nu \defined 3 \ln{1 \over \delta w_\nu} + \ln {1 \over w_\mu}$ and
apply Lemma \ref{lem:hellinger-error} and Markov's inequality.
\eq{
\P{\Delta_\nu \geq \lambda_\nu}
&= \P{e^{ \Delta_\nu / 2} \geq e^{\lambda_\nu/2}}
\leq e^{-\lambda_\nu/2} \E_\mu[ e^{\Delta_\nu/2} ]
\leq {e^{-\lambda_\nu/2} \over \sqrt{w_\nu \delta}} = w_\nu \delta.
} 
By Theorem \ref{thm:hellinger-concentration} we have that $H_\infty \leq \ln \sfrac{1}{w_\mu} + 2\ln \sfrac{1}{\delta w_\mu}$
with $\mu$-probability at least $1 - w_\mu \delta$ and
by the union bound and the fact that $\sum_{\nu} w_\nu = 1$ we obtain with probability at least $1 - \delta$ that
$\Delta_\nu \leq \lambda_\nu$ for all $\nu$ and $H_\infty \leq \ln\frac{1}{w_\mu} + 2 \ln{\frac{1}{\delta}}$, which
when substituted into \eqr{eq:hellinger} leads to
$\sum_{t=1}^\infty \hat h_t \leq \frac{2}{w_\mu} (2 \ln \frac{1}{w_\mu} + 5 \ln \frac{1}{\delta} + 3 \ent(w))$
as required.
\end{proofof}
\begin{figure}[H]
\extrarowsep=5pt
{
\scriptsize
\begin{tabu} to\linewidth{|X[1]X[4]X[4]|}
\hline
$|\M|$														& Expectation & High Probability \\ \hline
\multirow{2}{*}{$\infty$} 								& 
$\E_\mu \sum_{t=1}^\infty \hat d_t \lesssim {\ent(w) \over \delta w_\mu^2}$ &
$\sum_{t=1}^\infty \hat d_t \lesssim {\ent(w) \over \delta^2 w_\mu^2}$ \\
 & $\E_\mu \sum_{t=1}^\infty {\hat h_t} \lesssim {1 \over w_\mu}\left( \ent(w) + \ln{1 \over w_\mu \delta}\right) $ &
$\sum_{t=1}^\infty \hat h_t \lesssim {1 \over w_\mu} \left(\ent(w) + \ln{1 \over w_\mu \delta} \right)$ \\ \hline
\multirow{2}{*}{\begin{tabular}{l}\tiny $K$ \\ $w_\nu\! =\! {1\over K}$ \end{tabular}} 								& 
$\E_\mu \sum_{t=1}^\infty \hat d_t \lesssim {{K \over \delta} \ln^2 K}$ &
$\sum_{t=1}^\infty \hat d_t \lesssim {{K \over \delta^2} \ln^2 K}$ \\
  & $\E_\mu \sum_{t=1}^\infty {\hat h_t} \lesssim K \left(\ln K + \ln{1 \over \delta}\right) $ &
$\sum_{t=1}^\infty \hat h_t \lesssim K \left(\ln K + \ln{1 \over \delta}\right)$ \\ \hline
\end{tabu} \\[1pt]
\hfill {$\lesssim$ ignores constant multiplicative factors}
}
\caption{Confidence bounds}
\label{table:confidence}
\end{figure}

\section{KWIK Learning}\label{sec:kwik}

\begin{wrapfigure}[8]{r}{6cm}
\small
\begin{minipage}{6cm}
\begin{algorithm}[H]
\caption{KWIK Learner}
\scriptsize
\begin{algorithmic}[1]
\State {\bf Inputs:} $\varepsilon$, $\delta$ and $\M \defined \left\{\nu_1, \nu_2,\cdots,\nu_{K}\right\}$.
\State $t \leftarrow 1$ and $\omega_{<t} \leftarrow \epsilon$ and $w_\nu = {1\over K}$
\Loop
\If{$\hat h_t(\omega_{<t}) \leq \varepsilon$}
\State output $\xi(\cdot|\omega_{<t})$
\Else
\State output $\bot$
\EndIf
\State observe $\omega_t$ and $t \leftarrow t + 1$ 
\EndLoop
\end{algorithmic}
\end{algorithm}
\end{minipage}
\end{wrapfigure}
The KWIK learning framework involves an environment and agent interacting sequentially as depicted below.
Suppose $|\M| = K < \infty$ and $\varepsilon,\delta > 0$ are known to both parties. 
A run starts with the environment choosing an unknown $\mu \in \M$. At each time-step $t$ thereafter
the agent chooses between outputting a predictive distribution $\rho(\cdot|\omega_{<t})$ and special symbol $\bot$.
The run is failed if the agent outputs $\rho$ and $\hellinger{\omega_{<t}}{\rho}{\mu} > \varepsilon$,
otherwise $\omega_t$ is observed and the run continues. 
An agent is said to be KWIK if it fails the run with probability at most $\delta$ and chooses $\bot$ at most
$B(\varepsilon, \delta)$ times with probability at least $1 - \delta$.
Ideally, $B(\varepsilon,\delta)$ should be polynomial in $\sfrac{1}{\varepsilon}$ and $\sfrac{1}{\delta}$ \cite{LLWS11}.

\begin{figure}[H]
\scriptsize
\centering
\begin{tikzpicture}[draw,>=stealth',thick,scale=0.8]
\tikzstyle{box}=[draw,minimum height=0.5cm,minimum width=2.3cm]
\tikzstyle{line}=[draw,->]

\node[box] (confident) at (0,2.1) {Am I confident?};
\node[box] (output) at(-4,2.1) {output $\rho(\cdot|\omega_{<t})$};
\node[box] (clueless) at(4,2.1) {output $\bot$};
\node[box] (correct) at (-4,3.8) {$\hellinger{\omega_{<t}}{\rho}{\mu} \leq \varepsilon$?};
\node[box] (give) at (0,3.8) {present $\omega_t$ to agent};
\node[box] (fail) at (-4,5.0) {agent fails run};
\node[box] (choose) at (6,3.8) {choose $\mu \in \M$};
\node[rotate=90] (agent) at (-6,1.9) {Agent};
\node[rotate=90] (env) at (-6,4.3) {Environment};

\path[line] (confident) -- node[below] {yes} (output);
\path[line] (confident) -- node[below] {no} (clueless);
\path[line] (output) -- (correct);
\path[line] (clueless) |- (give);
\path[line] (correct) -- node[above] {yes} (give);
\path[line] (correct) -- node[left] {no} (fail);
\path[line] (give) -- (confident);
\path[draw] (choose) |- ($(confident.south)!-0.3cm!(confident.north)$);
\path[line] ($(confident.south)!-0.3cm!(confident.north)$) -- (confident);
\draw[line width=0.3cm,white] (-5.9,2.9) -- (7.9,2.9);
\draw[dashed] (-6,2.9) -- (7.9,2.9);
\end{tikzpicture}
\caption{KWIK learning framework}
\label{fig:kwik}
\end{figure}
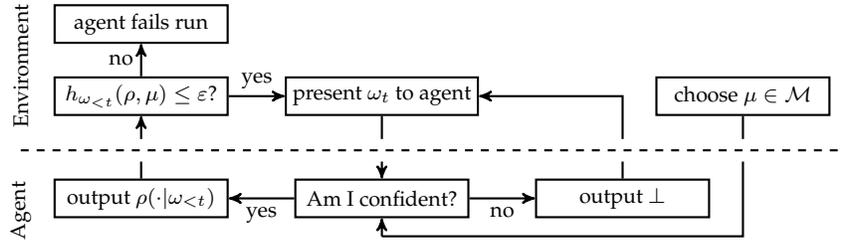

\begin{theorem}
Algorithm 1 is KWIK.
\end{theorem}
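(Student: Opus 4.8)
The plan is to verify the two defining properties of a KWIK learner separately, in each case reducing to a result already established in the Confidence section, specialised to the finite uniform setting. Since $|\M| = K$ and $w$ is uniform we have $w_\mu = 1/K$ and $\ent(w) = \ln K$, both known to the agent. Consequently the factors $w_\mu/w_\nu$ in the definition of $\M_t$ and $w_\nu/w_\mu$ in the definition of $\hat h_t$ are all equal to $1$, so $\M_t$ and $\hat h_t(\omega_{<t})$ are fully computable from the observed sequence and Algorithm 1 is well defined. This is exactly the point noted after the definition of $\hat h_t$: in the finite case the unknown-$w_\mu$ obstacle is side-stepped by the uniform prior.

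First I would bound the failure probability. The algorithm outputs a prediction $\rho = \xi(\cdot|\omega_{<t})$ only at time-steps where $\hat h_t \leq \varepsilon$, and such a step fails the run exactly when $\hellinger{\omega_{<t}}{\rho}{\mu} > \varepsilon$; by symmetry of the squared Hellinger distance this is precisely $h_t > \varepsilon$. Hence a failure can occur only if $h_t > \hat h_t$ for some $t$. By part $(\star\star)$ of Theorem \ref{thm:confidence} the event $\{\forall t : h_t \leq \hat h_t\}$ has $\mu$-probability at least $1 - \delta$, and on this event no step with $\hat h_t \leq \varepsilon$ can have $h_t > \varepsilon$, so no failure is possible. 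Therefore the run is failed with probability at most $\delta$.

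Next I would bound the number of $\bot$ symbols. The agent outputs $\bot$ precisely when $\hat h_t > \varepsilon$, so the total count is $N \defined \sum_{t=1}^\infty \ind{\hat h_t > \varepsilon}$. Since $\hat h_t \geq 0$ the elementary inequality $\ind{\hat h_t > \varepsilon} \leq \hat h_t / \varepsilon$ holds, giving $N \leq \frac{1}{\varepsilon}\sum_{t=1}^\infty \hat h_t$. Applying the high-probability part of Theorem \ref{thm:Hellinger-estimator-convergence} with $w_\mu = 1/K$ and $\ent(w) = \ln K$ shows that with $\mu$-probability at least $1 - \delta$,
\[
\sum_{t=1}^\infty \hat h_t \leq 2K\left(2\ln K + 5\ln\tfrac{1}{\delta} + 3\ln K\right) = 10\,K\left(\ln K + \ln\tfrac{1}{\delta}\right).
\]
Consequently, with probability at least $1 - \delta$, the agent outputs $\bot$ at most
\[
B(\varepsilon,\delta) = \frac{10\,K}{\varepsilon}\left(\ln K + \ln\tfrac{1}{\delta}\right)
\]
times. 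This is linear in $1/\varepsilon$ and logarithmic in $1/\delta$, hence polynomial as required, and it matches the finite-class entry of Figure \ref{table:confidence}. Together with the failure bound this establishes that Algorithm 1 is KWIK.

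The argument is essentially bookkeeping once the confidence machinery is in place; the only points demanding care are (i) checking that the uniform-prior assumption renders $\hat h_t$ genuinely computable, so that the $w_\mu$-dependence is not an obstacle, and (ii) keeping the two probabilistic guarantees logically separate, since the KWIK definition states the failure bound and the $\bot$-count bound as two independent high-probability claims rather than a single joint one. I expect no genuine difficulty beyond correctly instantiating the finite-case constants from Theorem \ref{thm:Hellinger-estimator-convergence}.
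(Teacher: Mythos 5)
Your proposal is correct and follows essentially the same route as the paper: Theorem \ref{thm:confidence}~$(\star\star)$ gives the failure bound, and the high-probability part of Theorem \ref{thm:Hellinger-estimator-convergence} combined with the counting inequality $|\{t : \hat h_t > \varepsilon\}| \leq \frac{1}{\varepsilon}\sum_t \hat h_t$ gives the $\bot$-count; you merely work out the explicit constant $10K(\ln K + \ln\frac{1}{\delta})/\varepsilon$ where the paper hides it in $O(\cdot)$.
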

\begin{proof}
By Theorem \ref{thm:confidence}, Algorithm 1 fails a run with probability at most $\delta$.
Using $\gtrsim$ to ignore constant multiplicative factors, by Theorem \ref{thm:Hellinger-estimator-convergence} we have that
\eq{
\P{\left|\set{t : \hat h_t \geq \varepsilon}\right| \gtrsim {K \over \varepsilon} \ln{K \over \delta}}
&\leq \P{\textsum_{t=1}^\infty \hat h_t \gtrsim {K}\ln {K \over \delta}}
\leq \delta.
}
Therefore the agent will choose $\bot$ at most
$\O{{K \over \varepsilon} \ln {K \over \delta}}$ times
with probability at least  $1 - \delta$.  
\end{proof}
The Hellinger distance upper bounds the total variation distance. 
$\totalvar{x}{\mu}{\xi} = {1 \over 2}\sum_{a \in \A} |\mu(a|x) - \xi(a|x)| \leq \sqrt{\hellinger{x}{\mu}{\xi}}$.
Therefore if Algorithm 1 is run with $\varepsilon = \varepsilon_1^2$, then with high probability when predicting it will be 
$\varepsilon_1$-optimal with respect to the total variation
distance and it will output $\bot$ at most $\O{{K \over \varepsilon^2_1} \ln {K \over \delta}}$ times, which is the same bound achieved by the $k$-meteorologist algorithm \cite{DLL09}.

\section{Conclusions}

The bound on the squared Hellinger distance $\hat h_t$ is especially nice because the results are rather clean.
While the super-linear dependence on the size of the model class in Figure \ref{table:confidence} is unfortunate,
it is a worst-case bound that is only achieved when at each time-step only one model differs from $\xi$ 
(see the proof of Proposition \ref{prop:curiosity-lower} for an example environment class when this occurs).
For Bernoulli classes the estimator performs comparably with the Hoeffding bound 
\iftecrep(Appendix \ref{app:experiments}). \else \cite{LHS13bayes-conc-tech}. \fi
In the case when $\M$ is countable $\hat h_t$ is independent of $\mu$, but not $w_\mu$, which is also typically unknown.
Either choose a conservatively small $w$ and pay the ${1 \over w}\ln \frac{1}{w}$ price, or decrease $w$ with $t$ at some slow rate, say $w = \sqrt{\text{\sfrac{1}{$t$}}}$. 
Analyzing this situation is interesting future work.

There is opportunity for some improvement on the bound $\hat d$. Intuitively we expect the real dependence on $\sfrac{1}{\delta}$ 
ought to be logarithmic, not linear. The unimprovable result of Theorem \ref{thm:probability-bound} is interesting when compared to Theorem \ref{thm:hellinger-concentration}. Researchers frequently bound
the total variation distance via the KL divergence. These results show that this is sometimes weaker than using the Hellinger distance when high-probability bounds are required.

KWIK learning for sequence prediction 
was chosen because our new results can easily be applied to prove a state-of-the-art bound in that setting. Although we have the same theoretical guarantee as the $k$-meteorologist algorithm \cite{DLL09}, 
our simple algorithm 
eliminates environments smoothly as they become unlikely while in that work no model (expert) is discarded before at least $m = O(\frac{1}{\varepsilon^2} \ln{\frac{1}{\delta}})$ differentiating
samples have been observed. This distinction makes us suspect that Algorithm 1 may perform more efficiently in practice. Additionally, assuming $\nu(\cdot|x)$ can be computed in constant time, then
Algorithm 1 runs in $O(K)$ time per time-step, while a naive implementation of the $k$-meteorologist algorithm 
appears to have $O(K^2)$ running time per time-step. 

Finally, we want to emphasize the generality of the results, especially 
Theorem \ref{thm:Hellinger-estimator-convergence}, which although tight in a minimax sense, can likely be improved in easier cases without changing the definition of $\hat h_t$. 
An interesting continuation is the parametric case that is intuitively straight-forward, but technically challenging (see \cite{BC90} and \cite[\S3]{Hut05} for some of the required techniques). 

\bibliography{all}


\appendix

\iftecrep
\section{Technical Lemmas}

\begin{lemma}[\citen{Vov87}]\label{lem:hellinger}
Let $p$ and $q$ be distribution on $\A$, then
\eq{
\textsum_{a \in \A} \sqrt{p(a) q(a)} \leq \exp\left(-{1 \over 2}\textsum_{a \in \A} \left(\sqrt{p(a)} - \sqrt{q(a)}\right)^2\right). 
}
\end{lemma}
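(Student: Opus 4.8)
The plan is to reduce the claimed inequality to the elementary scalar bound $1 + x \le e^x$. The key observation is that the right-hand side is secretly just an exponential of the Bhattacharyya affinity, so everything collapses to a one-line convexity estimate.

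First I would introduce the affinity $s \defined \sum_{a \in \A} \sqrt{p(a) q(a)}$ and expand the squared Hellinger distance. Because $p$ and $q$ are probability distributions on $\A$ (so each sums to one), we have
\eq{
\sum_{a \in \A} \left(\sqrt{p(a)} - \sqrt{q(a)}\right)^2
= \sum_{a \in \A} \left(p(a) - 2\sqrt{p(a) q(a)} + q(a)\right)
= 2 - 2s.
}
Substituting this into the exponential on the right-hand side of the statement shows that it equals $\exp\!\left(-\tfrac{1}{2}(2 - 2s)\right) = \exp(s - 1)$. Hence the inequality to be proved is equivalent to the purely scalar claim $s \le \exp(s - 1)$.

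I would then finish with the standard inequality $1 + x \le e^x$, valid for all real $x$. Taking $x = s - 1$ yields $s = 1 + (s-1) \le e^{s-1}$, which is exactly the desired bound. I expect no genuine obstacle here: the entire content is the algebraic expansion plus one application of $1 + x \le e^x$. The only points worth a brief remark are that the expansion uses $\sum_a p(a) = \sum_a q(a) = 1$, and, in the countable-$\A$ case, that the affinity $s$ is finite — which follows from $\sqrt{p(a)q(a)} \le \tfrac{1}{2}(p(a) + q(a))$ so that $s \le 1$, keeping all the sums absolutely convergent.
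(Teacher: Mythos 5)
Your proof is correct. The paper itself gives no proof of this lemma --- it is stated as a cited result of Vovk --- so there is nothing to diverge from; your argument (expand the squared Hellinger distance using $\sum_a p(a)=\sum_a q(a)=1$ to rewrite the right-hand side as $e^{s-1}$ with $s$ the affinity, then apply $1+x\le e^x$) is the standard one, and your remarks on absolute convergence in the countable case are the only care needed.
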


\begin{lemma}[\citen{Vil39}]\label{lem:supermartingale}
If $z_{<t} \defined \xi_{<t} / \mu_{<t}$, then
$z_{<t}$ is a $\mu$-super-martingale, $\P{\lim_{t\to\infty} z_{<t} < \infty} = 1$ and
$\P{\sup_{t} z_{<t} \geq {1 \over \delta}} \leq \delta$.
\end{lemma}

\begin{proof}
The proof is straight-forward and is included for the sake of completeness.
Define $A \subset X^*$ to be the set of finite strings defined by
\eq{
A \defined \set{x \in X^* : \xi(x) / \mu(x) \geq 1/ \delta \wedge \forall t \leq \ell(x), \xi(x_{<t}) / \mu(x_{<t}) < 1/\delta }
}
So $A$ is the set of finite strings where $\xi(x) / \mu(x)$ first drops below $\delta$. Let $\omega \in X^\infty$ and $z_t(\omega) \geq 1/\delta$, then
there exists a $t$ such that $\omega_{1:t} \in A$. Therefore if $\bar A = \set{x \in A : \mu(x) > 0}$, then 
\eq{
\mu(\lim_{t\to\infty} z_t \geq 1/\delta) 
\sr{(a)}= \sum_{x \in A} \mu(x)
\sr{(b)}= \sum_{x \in \bar A} \mu(x)
\sr{(c)}= \sum_{x \in \bar A} \mu(x) \xi(x) / \xi(x)
\sr{(d)}\leq \delta \sum_{x \in \bar A} \xi(x)
\sr{(e)}\leq \delta
}
where (a) follows from the definition of $A$ and $z_t$. 
(b) since $\mu(x) = 0$ when $x \in A - \bar A$.
(c) since $\xi(x) \geq w_\mu \mu(x) > 0$ for $x \in \bar A$.
(d) by the bound $\mu(x) / \xi(x) \leq \delta$ for $x \in A$.
(e) since $\xi$ is a measure.
\end{proof}

\begin{lemma}\label{lem:exists}
Both $\E_\mu \ln (\mu_{<n} / \xi_{<n})$ and $\E_\mu d_n$ exist and are finite.
\end{lemma}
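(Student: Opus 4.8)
The plan is to reduce both claims to the finiteness of the relative entropy at each finite length, namely $S_k \defined \E_\mu \ln(\mu_{1:k}/\xi_{1:k}) = \sum_{x \in \A^k} \mu(x)\ln(\mu(x)/\xi(x))$, noting that the first quantity in the statement is exactly $S_{n-1}$. The only genuine subtlety is that $\A$, and hence $\A^k$, may be countably infinite, so this series need not converge a priori; the work is to show it converges \emph{absolutely}, after which everything else is routine.

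First I would bound the summand $g(x) \defined \mu(x)\ln(\mu(x)/\xi(x))$ from both sides. For the positive part, the dominance $\xi(x) \ge w_\mu \mu(x)$ gives $\mu(x)/\xi(x) \le 1/w_\mu$, hence $g(x) \le \mu(x)\ln(1/w_\mu)$, so that $\sum_x g(x)^+ \le \ln(1/w_\mu)$ because $\sum_x \mu(x) = 1$. For the negative part I would invoke the elementary inequality $a \ln(a/b) \ge a - b$ (valid for all $a,b>0$), which yields $g(x) \ge \mu(x) - \xi(x) \ge -\xi(x)$, so $\sum_x g(x)^- \le \sum_x \xi(x) = 1$, using that $\{\Gamma_x\}_{x \in \A^k}$ partitions $\A^\infty$ and $\xi$ is a probability measure. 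Together these give $\sum_{x} |g(x)| \le \ln(1/w_\mu) + 1 < \infty$, so $S_k$ exists and is finite for every $k$; in particular $\E_\mu\ln(\mu_{<n}/\xi_{<n}) = S_{n-1}$ is finite, settling the first assertion.

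For $\E_\mu d_n$, I would first observe that $d_n \ge 0$ pointwise, being a KL divergence, so the integral automatically exists in $[0,\infty]$ and only finiteness is at issue. The key is the chain-rule telescoping identity $\E_\mu d_n = S_n - S_{n-1}$: expanding $\mu(a|x) = \mu(xa)/\mu(x)$ and $\xi(a|x) = \xi(xa)/\xi(x)$ inside $\KL{x}{\mu}{\xi}$ and summing $\mu(x)\KL{x}{\mu}{\xi}$ over $x \in \A^{n-1}$ collapses to $\sum_{y\in\A^n}\mu(y)\ln(\mu(y)/\xi(y)) - \sum_{x\in\A^{n-1}}\mu(x)\ln(\mu(x)/\xi(x))$, i.e.\ $S_n - S_{n-1}$. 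The rearrangement of this doubly-indexed series is legitimate precisely because of the absolute convergence established above. Since both $S_n$ and $S_{n-1}$ are finite, $\E_\mu d_n$ is finite.

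I expect the main obstacle to be the first step, controlling the countably infinite sum $\sum_{x \in \A^k} g(x)$. The positive tail is easy, being uniformly comparable to $\mu(x)$ via dominance, but the negative tail is the delicate part: it is the $a\ln(a/b) \ge a-b$ bound that lets $g(x)^-$ be dominated by the summable weights $\xi(x)$, and this is exactly the point that uses $\xi$ being a probability measure and that would fail for a naive signed rearrangement. Once absolute convergence is secured, the telescoping identity and the finiteness of $\E_\mu d_n$ follow with no further difficulty.
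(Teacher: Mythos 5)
Your argument is correct and is essentially the paper's own proof: the first claim is handled identically (split the sum over $\A^{n-1}$ by the sign of $\ln(\mu(x)/\xi(x))$, bound the positive part by $\ln(1/w_\mu)$ via dominance $\xi \geq w_\mu \mu$, and bound the negative part by $1$ via $\mu(x)\ln(\xi(x)/\mu(x)) \leq \xi(x)$, which is the same inequality as your $a\ln(a/b)\geq a-b$). The only cosmetic difference is in the second claim: you take the $\mu$-expectation of the chain rule exactly, giving $\E_\mu d_n = S_n - S_{n-1}$ with both terms finite by the first part, whereas the paper bounds $d_n$ pointwise by $|\ln(\mu_{<n}/\xi_{<n})| + \ln(1/w_\mu)$ using the conditional form of the same chain rule and then appeals to the first part -- both are valid, and your explicit attention to the absolute convergence needed to justify the rearrangement over a countable alphabet is exactly the right point to worry about.
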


\begin{proof}
Let $A \defined \set{x \in \A^{n-1} : \mu(x) > 0}$ and $B = \set{x \in A : \mu(x) / \xi(x) \geq 1}$ and $C = \set{x \in A :\mu(x) / \xi(x) < 1}$.
If $\one_B$ is the indicator event $\one_B(\omega) \defined \ind{\omega \in \Gamma_x : x \in B}$ and $\one_C$ is defined similarly, then
\eq{
\E_\mu \left[|\ln \mu_{<n} / \xi_{<n}| \right]
&= \E_\mu\left[ \one_{B}\ln \mu_{<n} / \xi_{<n}\right] + \E_\mu \left[\one_C \ln \xi_{<n} / \mu_{<n}\right] \\
&\leq \E_\mu\left[-\one_B \ln{w_\mu}\right] + \E_\mu \left[\one_C \xi_{<n} / \mu_{<n}\right] 
\leq -\ln w_\mu + 1
}
where the first inequality is due to the dominance $\xi_{<n} \geq w_\mu \mu_{<n}$ and the inequality $\ln x \leq x$ for all $x \geq 1$. The second
inequality follows from basic properties of expectation.
For the second part note that $d_{n+1}$ is positive and by \eqr{eq:KL} that $d_{n} \leq \ln \xi_{<n} / \mu_{<n} - \ln w_\mu  \leq |\ln \mu_{<n} / \xi_{<n}| - \ln w_\mu$. Then proceed as in the first part.
\end{proof}
\fi

\iftecrep
\section{Proof of Theorem \ref{thm:solomonoff-stopping}}\label{app:solomonoff}
First we note that the squared Hellinger distances is bounded by the KL divergence, so $H_\infty \leq D_\infty$.
We now bound $\E_\mu D_\infty$,
which follows from the chain rule for the conditional relative entropy. Fix $n \in \N$ and assume that
\eqn{
\tag{$\star$}
\Delta_{n-1} \defined \E_\mu \textsum_{t=1}^n d_t = \E_\mu \ln{\mu_{<n} \over \xi_{<n}},
}
which is easily verified when $n = 1$. Therefore
\eq{
\Delta_{n} 
&\sr{(a)}= \E_{\mu} \ln{\mu_{<n} \over \xi_{<n}} + \E_\mu d_{n} 
\sr{(b)}=\E_\mu\left[\E_\mu\left[\ln{\mu_{1:n} \over \xi_{1:n}}\bigg|\F_{<n}\right]\right] 
\sr{(c)}= \E_\mu \ln{\mu_{1:n} \over \xi_{1:n}}.
}
(a) holds by Lemma \ref{lem:exists}.
(b) by \eqr{eq:KL} and the definition of expectation.
(c) by the definition of (conditional) expectation.
Therefore $(\star)$ holds for all $n$ by induction.
By substituting dominance $\xi_n \geq w_\mu \mu_n$ into $(\star)$ one obtains that $\Delta_n \leq -\E_\mu \ln w_\mu = -\ln w_\mu$.
The proof is completed by taking the limit as $n \to\infty$ and applying the Lebesgue monotone convergence theorem to show
that $\E_\mu D_\infty = \lim_{n\to\infty} \Delta_n \leq -\ln w_\mu$. That $d_t$ and $h_t$ converge to $0$ with $\mu$-probability $1$
follows from Markov's inequality applied to $D_\infty$ and $H_\infty$ respectively.
\fi

\section{Proof of Theorem \ref{thm:curiosity-finite}}
\iftecrep
\else
\begin{lemma}[\citen{Vil39}]\label{lem:supermartingale}
If $z_{<t} \defined \xi_{<t} / \mu_{<t}$, then
$z_{<t}$ is a $\mu$-super-martingale, $\P{\lim_{t\to\infty} z_{<t} < \infty} = 1$ and
$\P{\sup_{t} z_{<t} \geq {1 \over \delta}} \leq \delta$.
\end{lemma}
\fi

If $t \leq t'$ are stopping times, then
$I(\omega) = [t(\omega), t'(\omega))$ is called a stopping interval
and $\A(I) \defined \A(t)$ is the set of finite sequences when the start of $I$ becomes known. 
If $\rho$ is a measure, then $\rho(I) \defined \sum_{x \in \A(I)} \rho(x)$ is the $\rho$-probability of encountering interval $I$ at some point.

\begin{lemma}\label{lem:solomonoff-interval}
Let $\nu \in \M$ and $I$ be a stopping interval. Then
\eq{
\E_\nu \textsum_{t \in I} \KL{t}{\nu}{\xi} \leq \textsum_{x \in \A(I)} \nu(x) \left(\ln {1 \over w_\nu} + \ln {\xi(x) \over \nu(x)}\right). 
}
\end{lemma}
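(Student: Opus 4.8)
The plan is to exploit the self-similarity of the Bayes mixture under conditioning: after observing a finite prefix $x$, the measure $\xi(\cdot\mid x)$ is again a Bayes mixture, now over the conditional measures $\{\nu(\cdot\mid x)\}_{\nu \in \M}$ with prior weight $w_\nu \nu(x)/\xi(x)$ on the component $\nu$. This reduces the interval sum to a fresh application of Theorem~\ref{thm:solomonoff-stopping} started from each $x \in \A(I)$, exactly the manoeuvre already used in the proof of Theorem~\ref{thm:probability-bound}.

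First I would decompose the expectation according to the prefix at which $I$ begins. Since $\A(I) = \A(t)$ is the set of finite strings where the start time $t$ becomes known, the event $\{t < \infty\}$ is the disjoint union $\bigsqcup_{x \in \A(I)} \Gamma_x$, while on $\{t = \infty\}$ the interval $I$ is empty so that $\sum_{t \in I}\KL{t}{\nu}{\xi} = 0$. The tower property then gives
\[
\E_\nu \sum_{t \in I} \KL{t}{\nu}{\xi}
= \sum_{x \in \A(I)} \nu(x)\,\E_\nu\!\left[\sum_{t \in I} \KL{t}{\nu}{\xi}\,\bigg|\,\Gamma_x\right].
\]

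Next, for fixed $x \in \A(I)$ the interval $I$ starts at $\ell(x)+1$, so $I \subseteq [\ell(x)+1,\infty)$; since every KL term is nonnegative I may enlarge the summation range and then apply Theorem~\ref{thm:solomonoff-stopping} in the conditional space, where the prior weight on $\nu(\cdot\mid x)$ is $w_\nu \nu(x)/\xi(x)$:
\[
\E_\nu\!\left[\sum_{t \in I} \KL{t}{\nu}{\xi}\,\bigg|\,\Gamma_x\right]
\leq \E_\nu\!\left[\sum_{t=\ell(x)+1}^{\infty} \KL{t}{\nu}{\xi}\,\bigg|\,\Gamma_x\right]
\leq \ln\frac{\xi(x)}{w_\nu \nu(x)}
= \ln\frac{1}{w_\nu} + \ln\frac{\xi(x)}{\nu(x)}.
\]
Substituting this back into the decomposition and retaining the weights $\nu(x)$ yields exactly the claimed bound.

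The only real obstacle is the measure-theoretic bookkeeping around the stopping time possibly being infinite: one must verify that the interval sum genuinely vanishes on $\{t=\infty\}$ and that the decomposition over $\A(I)$ is legitimate, which is precisely where the identity $\A(I)=\A(t)$ and the measurability of each cylinder prefix are needed. Notably the end time $t'$ plays no role beyond defining $I$, since the upper bound discards every term from $t'$ onward by nonnegativity; thus the inequality holds irrespective of $t'$, and in particular $t'$ need not even be a stopping time for the argument to go through.
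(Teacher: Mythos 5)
Your proposal is correct and follows essentially the same route as the paper's proof: decompose the expectation over the prefixes $x \in \A(I)$ where the start of the interval becomes known, enlarge the interval sum to $[\ell(x)+1,\infty)$ by nonnegativity of the KL terms, and apply Theorem \ref{thm:solomonoff-stopping} to the conditional mixture $\xi(\cdot|x)$ with posterior weight $w_\nu(x) = w_\nu \nu(x)/\xi(x)$, then expand the logarithm. The additional remarks on the $\set{t=\infty}$ case and the irrelevance of the end time $t'$ are sound but do not change the argument.
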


\begin{proofof}{}
The result follows from Theorem \ref{thm:solomonoff-stopping} and definitions. Let $t$ be the stopping time
governing the start of interval $I$. Then
\eq{
\E_\nu \textsum_{t \in I} \KL{t}{\nu}{\xi} 
&\sr{(a)}= \textsum_{\mathclap{x \in \A(I)}} \nu(x)\E_\nu \left[ \textsum_{t \in I} \KL{t}{\nu}{\xi}\Bigg\vert x \right] \\ 
&\sr{(b)}\leq \textsum_{\mathclap{x \in \A(I)}} \nu(x)\E_\nu \left[ \textsum_{t=\ell(x)+1}^\infty \KL{t}{\nu}{\xi}\Bigg\vert x \right] \\ 
&\sr{(c)}\leq \textsum_{\mathclap{x \in \A(I)}} \nu(x) \ln {1 \over w_\nu(x)} 
\sr{(d)}= \textsum_{\mathclap{x \in \A(I)}} \nu(x) \left(\ln {1 \over w_\nu} + \ln {\xi(x) \over \nu(x)}\right).
}
(a) follows by by the definition of expectation.
(b) by increasing the size of the interval.
(c) follows from Theorem \ref{thm:solomonoff-stopping} by noting that $\xi(\cdot|x)$ is a mixture over
$\set{\nu(\cdot|x) : \nu \in \M}$ with prior $w(\cdot|x)$.
(d) because $w_\nu(x) = w_\nu \nu(x) / \xi(x)$ and by expanding the logarithm.
\end{proofof}

\begin{proofof}{ of Theorem \ref{thm:curiosity-finite}}
First, the quantity to be bounded can be rewritten as an average of $\nu$-expectations of a certain random variable. 
\eq{
\Delta &\defined \E_\mu \textsum_{t=1}^\infty c_t 
\sr{(a)}= \textsum_{t=1}^\infty \E_\mu c_t 
\sr{(b)}= \textsum_{t=1}^\infty \textsum_{x \in \A^{t-1}} \mu(x) \textsum_{\nu \in \M} {1 \over K} \cdot {\nu(x) \over \xi(x)} \KL{x}{\nu}{\xi} \\
&\sr{(c)}= {1 \over K}\textsum_{\nu \in \M} \textsum_{t=1}^\infty \textsum_{x \in \A^{t-1}} \nu(x) {\mu(x) \over \xi(x)} \KL{x}{\nu}{\xi} 
\sr{(d)}= {1 \over K}\textsum_{\nu \in \M} \textsum_{t=1}^\infty \E_\nu {\mu_{<t} \over \xi_{<t}} \KL{t}{\nu}{\xi} \\
&\sr{(e)}= {1 \over K}\textsum_{\nu \in \M} \underbrace{\E_\nu \textsum_{t=1}^\infty {\mu_{<t} \over \xi_{<t}} \KL{t}{\nu}{\xi}}_{\Delta(\nu)}.
}
(a) follows by the linearity of expectation and positivity of $c_t$.
(b) by writing out the definition of the expectation.
(c), (d) and (e) exchanging sums and the definition of expectation.
Define $a_t,b_t:\A^\infty \to \N$ by
\eq{
a_t(\omega) &\defined \sup_{t' \leq t} \floor{\ln \xi(\omega_{<t'}) / \nu(\omega_{<t'})} \quad &
b_t(\omega) &\defined \sup_{t' \leq t} \floor{\ln \mu(\omega_{<t'}) / \xi(\omega_{<t'})},
}
which are monotone non-decreasing. By the definition of $\xi$ as a uniform mixture over $\M$, $\mu(x) / \xi(x) \leq K$, so
$b_t(\omega) \leq \ln K \rdefined L$. Furthermore, $\mu(\epsilon) = \nu(\epsilon) = \xi(\epsilon) = 1$ implies
that $a_t(\omega), b_t(\omega) \geq 0$.
Define intervals of the following form
\eq{
I_{\beta}(\omega) &\defined \set{t : b_t = \beta \wedge a_t \leq \beta} \quad & 
I_{\alpha,\beta}(\omega) &\defined \set{t : a_t = \alpha \wedge b_t = \beta}.
}
Then $\N$ can be divided into disjoint intervals of the form $I_{\beta}$ and $I_{\alpha,\beta}$ where $\alpha > \beta$. 
\eqn{
\label{C3:eq:disjoint} \forall(\omega \in \A^\infty),\quad  \N = \bigcup_{\beta=0}^L \left(I_{\beta}(\omega) \cup \bigcup_{\alpha > \beta \in \N} I_{\alpha,\beta}(\omega)\right)
}
\iftecrep
See Figure \ref{C3:fig:talphabeta} for an example of the definition of $I_{\beta}$ and $I_{\alpha,\beta}$.
\fi
Then $\Delta(\nu)$ can be decomposed as follows
\eq{
\Delta(&\nu) \equiv \E_\nu \textsum_{t=1}^\infty {\mu_{<t} \over \xi_{<t}} \KL{t}{\nu}{\xi} \\
&=\underbrace{\textsum_{\beta=0}^L \E_\nu \textsum_{t \in I_\beta} {\mu_{<t} \over \xi_{<t}} 
\KL{t}{\nu}{\xi}}_{\Delta_{1}(\nu)} 
 + \underbrace{\textsum_{\beta=0}^L \textsum_{\alpha=\beta+1}^\infty \E_\nu \textsum_{t \in I_{\alpha,\beta}}   
{\mu_{<t} \over \xi_{<t}} 
\KL{t}{\nu}{\xi}}_{\Delta_2(\nu)}
}
where the second equality follows from \eqr{C3:eq:disjoint} and by linearity of the expectation.
We now bound $\Delta_1(\nu)$ and $\Delta_2(\nu)$.
\eq{
\Delta_{1}(\nu) &\equiv
\textsum_{\beta=0}^L \E_\nu \textsum_{t\in I_{\beta}}{\mu_{<t} \over \xi_{<t}} \KL{t}{\nu}{\xi} 
\sr{(a)}\leq \textsum_{\beta=0}^L e^{\beta+1} \E_\nu \textsum_{t \in I_\beta} \KL{t}{\nu}{\xi}\\
&\sr{(b)}\leq \textsum_{\beta=0}^L e^{\beta+1} \textsum_{\mathclap{x \in \A(I_{\beta})}} \nu(x)\left(L + \ln {\xi(x) \over \nu(x)}\right) 
\sr{(c)}\leq \textsum_{\beta=0}^L e^{\beta+1} \nu(I_{\beta}) \left(L + \beta + 1\right).
}
(a) follows since on the interval $I_{\beta}$ the quantity $\mu_{<t} / \xi_{<t} < e^{\beta+1}$.
(b) follows from Lemma \ref{lem:solomonoff-interval} and by noting that $\ln \nicefrac{1}{w_\mu} = \ln K = L$.
(c) by the definition of $\nu(I_{\beta})$ and because $\xi_{<t} / \nu_{<t} < e^{\beta+1}$ on the interval $I_{\beta}$.
$\Delta_2(\nu)$ is bounded in a similar fashion.
\eq{
\Delta_{2}(\nu) &\equiv
\textsum_{\beta=0}^L \textsum_{\alpha=\beta+1}^\infty \E_\nu \textsum_{{t \in I_{\alpha,\beta}}}
{\mu_{<t} \over \xi_{<t}} \KL{t}{\nu}{\xi} 
\sr{(a)}\leq
\textsum_{\beta=0}^L e^{\beta+1} \textsum_{\mathclap{\alpha=\beta+1}}^\infty \;\E_\nu \textsum_{\mathclap{t \in I_{\alpha,\beta}}} 
\KL{t}{\nu}{\xi} \\
&\sr{(b)}\leq 
\textsum_{\beta=0}^L e^{\beta+1} \textsum_{\alpha=\beta+1}^\infty \textsum_{x \in \A(I_{\alpha,\beta})} \nu(x) \left(L + \ln {\xi(x) \over \nu(x)}\right) \\
&\sr{(c)}\leq 
\textsum_{\beta=0}^L e^{\beta+1} \textsum_{\mathclap{\alpha=\beta+1}}^\infty \nu(I_{\alpha,\beta}) \left(L + \alpha + 1 \right) 
\sr{(d)}\leq
\textsum_{\beta=0}^L e^{\beta+1} \textsum_{\mathclap{\alpha=\beta+1}}^\infty e^{-\alpha} \left(L + \alpha + 1 \right) \\
&\sr{(e)}= \textsum_{\beta=0}^L e^{\beta+1} e^{-\beta}\left(L + \beta + 3\right) 
\sr{(f)}= 3(L+1)(L + 2).
}
(a) follows because $\mu_{<t} / \xi_{<t} < e^{\beta+1}$ on the interval $I_{\alpha,\beta}$ and by expanding the interval.
(b) by Lemma \ref{lem:solomonoff-interval}.
(c) because $\nu(I_{\alpha,\beta}) = \sum_{x \in \A(I_{\alpha,\beta})} \nu(x)$.
By definition, if $x \in \A(I_{\alpha,\beta})$, then a $\xi(x) / \nu(x) \geq e^{\alpha}$. By Lemma \ref{lem:supermartingale} the $\nu$-probability of
this ever occurring is at most $e^{-\alpha}$, which implies $\nu(I_{\alpha,\beta}) \leq e^{-\alpha}$ and so gives (d).
(e) and (f) follow from simple algebra.
Combining the bounds of $\Delta_1(\nu)$ and $\Delta_2(\nu)$ leads to
\eq{
\textsum_{\nu \in \M} &w_\nu \Delta(\nu) 
\equiv \textsum_{\nu \in \M} w_\nu \left(\Delta_1(\nu) + \Delta_2(\nu)\right) \\
&\sr{(a)}\leq 3(L+1)(L + 2) + \textsum_{\nu \in \M} w_\nu \textsum_{\beta=0}^L e^{\beta+1} \nu(I_{\beta}) (L + \beta + 1) \\
&\sr{(b)}= 3(L + 1)(L+2) + \textsum_{\beta=0}^L e^{\beta+1} \xi(I_{\beta})(L + \beta + 1) \\
&\sr{(c)}\leq 3(L + 1)(L + 2) + \textsum_{\beta=0}^L 2(L + \beta + 1) 
\sr{(d)}=6L^2 + 14L + 8
}
(a) by substituting the bounds for $\Delta_1(\nu)$ and $\Delta_2(\nu)$.
(b) by exchanging sums and recalling that $\sum_{\nu \in \M} w_\nu \nu(A) = \xi(A)$ for all measurable $A$.
(c) from Lemma \ref{lem:supermartingale} applied to bound $\xi(I_{\beta}) \leq e^{-\beta}$ in the same way as $\nu(I_{\alpha,\beta})$ was bounded.
(d) by simple algebra. The theorem is completed by substituting $L \defined \ln K$.
\end{proofof}

\iftecrep
\begin{figure}[H]
\centering
\begin{tabu}{|l|p{0.6cm}|p{0.6cm}|p{0.6cm}|p{0.6cm}|p{0.6cm}|p{0.6cm}|p{0.6cm}|p{0.6cm}|}\hline
time, $t$           & 1      &    2 &     3  &     4 &     5   &     6 &     7 &     8 \\ \hline
$\xi(\omega_{<t})/\nu(\omega_{<t})$ & 1      &    2 &     5  &     5 &     5   &     3 &     2 &     3 \\ 
$\mu(\omega_{<t})/\xi(\omega_{<t})$ & 1      &    2 &     2  &     1 &     5   &     2 &     1 &     9 \\ \hline
$a_t$									& 0				&		1 &			2	 &		 2 &     2   &     2 &     2 &     2 \\
$b_t$									& 0      &    1 &     1  &     1 &     2   &     2 &     2 &     3 \\ \hline 
											& 
											\multicolumn{1}{c|}{$I_0$}  &  
											\multicolumn{1}{c|}{$I_1$} &   
											\multicolumn{2}{c|}{$I_{2,1}$} &    
											\multicolumn{3}{c|}{$I_{2}$} &
											\multicolumn{1}{c|}{$I_3$}
											\\
\hline
\end{tabu}
\caption{Example $I_{\alpha,\beta}$ and $I_{\beta}$}
\label{C3:fig:talphabeta}
\end{figure}
\fi

\section{Proof of Proposition \ref{prop:curiosity-lower}}

Let $\A = \set{0,1}$ and define measure $\nu^k$ to be the deterministic measure producing $k$ ones followed by zeros
$\nu^k(1|x) \defined \ind{\ell(x) < k}$.
Let $\M \defined \set{\nu^k : 0 \leq k \leq K - 1}$ and the true measure be $\mu \defined \nu_{K-1}$.
The Bayes mixture over $\M$ under the uniform prior becomes
$\xi(x) \defined {1 \over K} \sum_{k=0}^{K-1} \nu^k(x)$.
If $t < K$, then by substituting definitions one obtains
$\xi(1^t) = {(K - t) / K}$ and
$\xi(0|1^t) = {1 /(K - t)}$.
Therefore
\eq{
\E_\mu &\textsum_{t=1}^\infty c_t 
\sr{(a)}\geq \E_\mu \textsum_{t=1}^K c_t 
\sr{(b)}=\textsum_{t=0}^{K-1} \textsum_{k=0}^{K-1} {1 \over K} {\nu^k(1^t) \over \xi(1^t)} \dstyle \KL{1^t}{\nu^k}{\xi}  \\
&\sr{(c)}\geq\textsum_{t=0}^{K-1} {\nu^t(1^t) \over K \xi(1^t)} \dstyle \KL{1^t}{\nu^t}{\xi} 
\sr{(d)}=\textsum_{t=1}^K  {\ln{t} \over t} 
\sr{(e)}\geq {1 \over 2} {\ln K} - 1. 
}
(a) follows by truncating the sum and positivity of $c_t$.
(b) by the definition of $c_t$, the expectation and because $\mu(1^t) = 1$ for all $t \leq K-1$.
(c) by dropping all terms in the sum over $k$ except for $k = t$ and positivity of all quantities.
(d) and (e) follow by substituting definitions and simple calculus/algebra.


\iftecrep
\fi


\iftecrep

\pgfplotstableread{
0	0.187436	0.765429
1	0.282247	0.730414
2	0.344673	0.574539
3	0.389856	0.527864
4	0.219038	0.463775
5	0.258548	0.443561
6	0.290370	0.431398
7	0.205356	0.388267
8	0.235324	0.361352
9	0.260965	0.344351
10	0.197107	0.339426
11	0.221046	0.334896
12	0.169717	0.319714
13	0.191907	0.296634
14	0.211803	0.301471
15	0.169713	0.289406
16	0.188374	0.273105
17	0.205393	0.274737
18	0.169714	0.261036
19	0.185815	0.262156
20	0.154473	0.253916
21	0.169714	0.250403
22	0.141778	0.245454
23	0.156224	0.238527
24	0.169714	0.235544
25	0.144749	0.232391
26	0.157614	0.228989
27	0.134864	0.223054
28	0.147149	0.220260
29	0.126257	0.216535
30	0.138004	0.210858
31	0.149128	0.207117
32	0.129941	0.205313
33	0.140624	0.202072
34	0.122777	0.198266
35	0.133049	0.198489
36	0.142844	0.189967
37	0.126256	0.192411
38	0.135706	0.187505
39	0.120130	0.186799
40	0.129255	0.183424
41	0.114575	0.181324
42	0.123395	0.178271
43	0.109516	0.176941
44	0.118048	0.175354
45	0.126256	0.174874
46	0.113149	0.173507
47	0.121109	0.171000
48	0.108644	0.169037
49	0.116368	0.167101
50	0.104486	0.165048
51	0.111988	0.164535
52	0.100637	0.163231
53	0.107928	0.159837
54	0.114984	0.159811
55	0.104154	0.157982
56	0.111024	0.157361
57	0.100636	0.155767
58	0.107329	0.154228
59	0.097350	0.152656
60	0.103874	0.150637
61	0.094273	0.150375
62	0.100636	0.148339
63	0.106822	0.147021
64	0.097595	0.146438
65	0.103636	0.145088
66	0.094734	0.144872
67	0.100636	0.143887
68	0.092036	0.142413
69	0.097806	0.142540
70	0.089489	0.141095
71	0.095132	0.140048
72	0.100636	0.139548
73	0.092601	0.137506
74	0.097990	0.137499
75	0.090202	0.136351
76	0.095480	0.134942
77	0.087925	0.134924
78	0.093097	0.132880
79	0.085760	0.132950
80	0.090829	0.131852
81	0.083700	0.130850
82	0.088671	0.130158
83	0.093534	0.129595
84	0.086613	0.129081
85	0.091386	0.128250
86	0.084649	0.127232
87	0.089334	0.125788
88	0.082772	0.124589
89	0.087373	0.123370
90	0.080977	0.123023
91	0.085497	0.122260
92	0.079258	0.121840
93	0.083700	0.121364
94	0.077612	0.119274
95	0.081977	0.119792
96	0.086259	0.118242
97	0.080324	0.118910
98	0.084536	0.117628
99	0.078736	0.117731
100	0.082880	0.116678
101	0.077211	0.115853
102	0.081288	0.115215
103	0.075743	0.114903
104	0.079756	0.114568
105	0.074331	0.114325
106	0.078281	0.113924
107	0.072971	0.113253
108	0.076860	0.112351
109	0.080685	0.112457
110	0.075490	0.111330
111	0.079258	0.111620
112	0.074168	0.110700
113	0.077881	0.111148
114	0.072892	0.110252
115	0.076551	0.108806
116	0.071658	0.108790
117	0.075266	0.107943
118	0.070467	0.107740
119	0.074024	0.106984
120	0.077527	0.106913
121	0.072822	0.106028
122	0.076277	0.105501
123	0.071658	0.105238
124	0.075067	0.104838
125	0.070532	0.104498
126	0.073896	0.103923
127	0.069440	0.103690
128	0.072760	0.103268
129	0.068382	0.102197
130	0.071658	0.102208
131	0.067355	0.101564
132	0.070590	0.101621
133	0.066359	0.100992
134	0.069553	0.099332
135	0.072704	0.099594
136	0.068547	0.098617
137	0.071659	0.098682
138	0.067569	0.097819
139	0.070643	0.097920
140	0.066618	0.097012
141	0.069656	0.096715
142	0.065694	0.096292
143	0.068696	0.096133
144	0.064796	0.095889
145	0.067762	0.095591
146	0.063922	0.094791
147	0.066853	0.094934
148	0.069748	0.094298
149	0.065969	0.094508
150	0.068831	0.093915
151	0.065107	0.093425
152	0.067938	0.093306
153	0.064268	0.092520
154	0.067067	0.092560
155	0.063450	0.092030
156	0.066219	0.091956
157	0.062653	0.091565
158	0.065392	0.091113
159	0.061876	0.090908
160	0.064585	0.090734
161	0.061117	0.090511
162	0.063798	0.090391
163	0.066448	0.090163
164	0.063030	0.090070
165	0.065652	0.089497
166	0.062280	0.089062
167	0.064875	0.088795
168	0.061547	0.088540
169	0.064117	0.088321
170	0.060832	0.087730
171	0.063375	0.087827
172	0.060133	0.087372
173	0.062651	0.087504
174	0.059450	0.087064
175	0.061943	0.087191
176	0.058782	0.086881
177	0.061251	0.086688
178	0.058129	0.086486
179	0.060574	0.085821
180	0.057492	0.085924
181	0.059912	0.085510
182	0.062308	0.085298
183	0.059264	0.085079
184	0.061638	0.084857
185	0.058630	0.084699
186	0.060981	0.084547
187	0.058009	0.084097
188	0.060339	0.084101
189	0.057401	0.083819
190	0.059710	0.083813
191	0.056806	0.083509
192	0.059093	0.082663
193	0.056224	0.082866
194	0.058489	0.082216
195	0.055653	0.082422
196	0.057898	0.081852
197	0.060123	0.082070
198	0.057318	0.081524
199	0.059523	0.081480
200	0.056750	0.081147
201	0.058936	0.081144
202	0.056193	0.080783
203	0.058359	0.080787
204	0.055647	0.080278
205	0.057794	0.079837
206	0.055112	0.079642
207	0.057240	0.079381
208	0.054587	0.079285
209	0.056696	0.078830
210	0.054072	0.078805
211	0.056163	0.078421
212	0.053566	0.078421
213	0.055640	0.078057
214	0.057696	0.077828
215	0.055126	0.077536
216	0.057165	0.076796
217	0.054622	0.076862
218	0.056644	0.076362
219	0.054128	0.076270
220	0.056133	0.076013
221	0.053642	0.075905
222	0.055630	0.075648
223	0.053166	0.075585
224	0.055137	0.075384
225	0.052698	0.075356
226	0.054653	0.074943
227	0.052239	0.074963
228	0.054177	0.074667
229	0.051788	0.074112
230	0.053710	0.074082
231	0.055619	0.073579
232	0.053252	0.073624
233	0.055144	0.073326
234	0.052801	0.073351
235	0.054678	0.073050
236	0.052358	0.072898
237	0.054220	0.072696
238	0.051923	0.072621
239	0.053770	0.072457
240	0.051496	0.071928
241	0.053328	0.071878
242	0.051076	0.071525
243	0.052894	0.071469
244	0.050664	0.071209
245	0.052467	0.071178
246	0.054256	0.070872
247	0.052047	0.070893
248	0.053823	0.070617
249	0.051635	0.070650
250	0.053396	0.070469
251	0.051229	0.069956
252	0.052977	0.070039
253	0.050831	0.069681
254	0.052565	0.069760
255	0.050439	0.069432
256	0.052160	0.069447
257	0.050054	0.069211
258	0.051761	0.069237
259	0.049676	0.068991
260	0.051370	0.068529
261	0.049304	0.068459
262	0.050985	0.068224
263	0.048938	0.068185
264	0.050606	0.067920
265	0.048579	0.067935
266	0.050234	0.067638
267	0.051877	0.067618
268	0.049868	0.067418
269	0.051498	0.067454
270	0.049508	0.067187
271	0.051126	0.066804
272	0.049154	0.066828
273	0.050760	0.066545
274	0.048806	0.066566
275	0.050400	0.066357
276	0.048464	0.066312
277	0.050045	0.066182
278	0.048127	0.066163
279	0.049697	0.066002
280	0.047796	0.066000
281	0.049355	0.065783
282	0.047470	0.065491
283	0.049018	0.065503
284	0.047150	0.065322
285	0.048686	0.065306
286	0.046835	0.065083
287	0.048360	0.065073
288	0.049873	0.064899
289	0.048040	0.064862
290	0.049541	0.064715
291	0.047724	0.064390
292	0.049215	0.064441
293	0.047414	0.064186
294	0.048894	0.064243
295	0.047109	0.064007
296	0.048578	0.064033
297	0.046808	0.063819
298	0.048268	0.063817
299	0.046513	0.063684
}\datatable

\section{Experiments}
\label{app:experiments}

\begin{wrapfigure}[7]{r}{5cm}
\vspace{-0.5cm}
\small
\begin{tikzpicture}
\begin{axis}[height=4.3cm,
		width=5cm,
		transpose legend,
		grid=major,
		xlabel=Time,
		ylabel=Hellinger Error,
		xmin=10,
		xmax=100,
		ymax=0.6,
		thick,
    compat=newest,
    ylabel shift=-0.1cm,
		xlabel shift=-0.1cm,
		every axis legend/.append style={nodes={right},font=\tiny}]
\addplot[mark=none,black,smooth,domain=10:100] {sqrt(1/(2*x) * ln(20*x*(x+1))};
\addlegendentry{$g_t$}
\addplot[mark=none,dashed,black,smooth,domain=10:100] {sqrt(1/(2*x) * ln(20)};
\addlegendentry{$f_t$}
\addplot[mark=none,red,smooth] table[x index=0,y index=2] from \datatable;
\addlegendentry{$\hat h_t$}
\addplot[mark=none,blue,smooth] table[x index=0,y index=1] from \datatable;
\addlegendentry{$\hat q_t$}
\end{axis}
\end{tikzpicture}
\end{wrapfigure}
We set $\delta = \sfrac{1}{10}$ and $\M = \set{\nu_0, \cdots, \nu_{40}}$ where $\nu_k$ is the Bernoulli measure with parameter 
$\theta_k \defined k / 40$. We then sampled 20,000 sequences of length 100 from the Lebesgue measure $\mu = \nu_{20}$
and computed the average value of $\hat h_t$. For each $t$ we computed the estimated quantile $\hat q_t$ as the value such that 
$\hellinger{t}{\mu}{\xi} < \hat q_t$ for 90\% of the samples. We compare to 
\eq{
f_t &\defined \sqrt{{1 \over 2t} \ln{2 \over \delta}} &
g_t &\defined \sqrt{{1 \over 2t} \ln{2t(t+1) \over \delta}}
}
which are obtained from the Hoeffding and union bounds and satisfy
\eq{
\mu\left( \left|\hat \theta_t - \sfrac{1}{2}\right| \leq f_t\right) &\geq 1 - \delta &
\mu\left( \forall t: \left|\hat \theta_t - \sfrac{1}{2}\right| \leq g_t\right) &\geq 1- \delta
}
where $\hat \theta_t$ is the empiric estimator of parameter $\theta$.
Some remarks:
\begin{itemize}
\item The frequentist estimator $\hat \theta_t(\omega_{<t}) \approx \xi(1|\omega_{<t})$ is very tight with high probability. Therefore
comparing error between $\hat \theta_t$ and the true parameter $\sfrac{1}{2}$ is essentially the same as comparing $\xi(\cdot|\omega_{<t})$ and $\mu(\cdot|\omega_{<t})$.
\item The comparison to $f_t$ is not entirely fair to $\hat h_t$ for two reasons. First because $\hat h_t$ upper 
bounds $h_t$ with high probability
for all $t$ while $f_t$ does so only for each $t$ and secondly because $f_t$ was based on the total variation distance, which is smaller than the Hellinger distance.
\item The comparison between $\hat h_t$ and $g_t$ is not fair to $g_t$ because the application of the union bound was rather weak.
\item The comparison to the quantile is not entirely fair to $\hat h_t$, since $\hat q_t$ is computed for a single $\theta$ and individually for each $t$, while
$\hat h_t$ must work for all models in $\M$ and all $t$.
\item We also ran the experiment with 21 uniformly distributed environments with almost identical results showing that $\hat h_t$ is an excellent bound and strengthening our
belief that at least in this simple setting the bound of Theorem \ref{thm:Hellinger-estimator-convergence} can be substantially improved
in the i.i.d.\ case.
\item The results indicate that $\hat h_t$ tracks close to $f_t$ and $\hat q_t$, which essentially lower-bounds the optimum. We expect the definition of $g_t$ can be
improved to follow close to $\hat h_t$ and $f_t$ without weakening the bound (holding for all $t$), but doubt that anything does much better than $\hat h_t$.
\end{itemize}
\fi

\iftecrep
\section{Table of Notation}\label{app:not}
\noindent
\hspace{-0.3cm}
\begin{tabu}{p{2cm} p{11cm}}
$\N$ & natural numbers \\
$\ind{expr}$ & indicator function of expression $expr$ \\
$\ln$ & natural logarithm  \\
$\A$  & finite or countable alphabet \\
$\A^*$& set of finite strings on $\A$ \\
$\A^\infty$& set of infinite strings on $\A$ \\
$x$, $y$ & symbols or strings in $X^*$ \\
$\ell(x)$ & length of string $x$ \\
$\epsilon$ & empty string of length $0$ \\
$\Gamma_x$ & cylinder set of $x$, $\Gamma_x \defined \set{x\omega : \omega \in X^\infty}$ \\
$\F_{<t}$ & sigma algebra generated by cylinders on strings of length $t-1$ \\
$\F$ & sigma algebra generated by by cylinders on strings of all finite lengths \\
$\M$  & environment class of hypothesis measures \\
$\mu$ & true measure from which sequences are sampled \\
$\xi$ & bayes mixture over all $\nu \in \M$ with prior $w:\M \to(0,1]$ \\
$\nu$, $\rho$ & measures in $\M$ \\
$\rho_t$ & random variable defined by $\rho_t(\omega) \defined \rho(\omega_t|\omega_{<t})$ \\
$\rho_{<t}$ & random variable defined by $\rho_{<t}(\omega) \defined \rho(\omega_{<t})$ \\ 
$\E_\mu$ & expectation w.r.t. $\mu$ \\
$w$   & prior distribution $w:\M \to (0,1]$ \\
$w_\nu$& prior weight of measure $\nu \in \M$ \\
$w_\nu(x)$ & posterior of measure $\nu \in \M$ having observed $x$ \\
$\ent$ & shannons entropy function \\
$\mu \absolute \xi$ & $\mu$ is absolutely continuous w.r.t. $\xi$ \\
$\KL{x}{\mu}{\xi}$ & KL divergence between predictive distributions of $\mu$ and $\xi$ given finite sequence $x$ \\
$\hellinger{x}{\mu}{\xi}$ & squared Hellinger distance between predictive distributions of $\mu$ and $\xi$ given finite sequence $x$ \\
$h_t$ & random variable $h_t(\omega) \defined \hellinger{\omega_{<t}}{\mu}{\xi}$ \\
$d_t$ & random variable $d_t(\omega) \defined \KL{\omega_{<t}}{\mu}{\xi}$ \\
$c_t$ & random variable $c_t(\omega) \defined \sum_{\nu\in\M} w_\nu(\omega_{<t}) \KL{\omega_{<t}}{\nu}{\xi}$ \\
$H_t$, $D_t$, $C_t$ & $\sum_{t=1}^\infty h_t$ and $\sum_{t=1}^\infty d_t$ and $\sum_{t=1}^\infty c_t$ respectively \\
$K$ & number of models in $\M$ \\
$\hat h_t$ & upper confidence bound on $h_t$ \\
$\hat d_t$ & upper confidence bound on $d_t$ \\
$\varepsilon, \delta$ & small positive reals with $\varepsilon$ typically an accuracy parameter and $\delta$ a confidence (probability) \\
\end{tabu}
\fi

\end{document}